\thanks[contrib]{Equal contribution.}%
\thanks[lab]{
INRIA - WILLOW Project, Laboratoire d'Informatique de l'Ecole Normale Sup\'erieure (INRIA/ENS/CNRS UMR~8548).
             23, avenue d'Italie, 75214 Paris. France}
\def\x{{\mathbf x}}
\def\z{{\mathbf z}}
\def\1{{\mathbf 1}}
\def\X{{\mathbf X}}
\def\gammab{{\boldsymbol\gamma}}
\def\varepsilonb{{\boldsymbol\varepsilon}}
\def\y{{\mathbf y}}
\def\w{{\mathbf w}}
\def\W{{\mathbf W}}
\def\N{{\mathcal N}}
\def\GG{{\mathcal G}}
\def\H{{\mathcal H}}
\def\e{{\mathbf e}}
\def\u{{\mathbf u}}
\def\Real{{\mathbb R}}
\def\u{{\mathbf u}}
\def\argmin{\operatornamewithlimits{arg\,min}}
\def\st{~~\text{s.t.}~~}
\def\defin{\triangleq}
\newcommand{\BlackBox}{\rule{1.5ex}{1.5ex}}
\newenvironment{proof}{\par\noindent{\bf Proof.\ }}{\hfill\BlackBox\\[2mm]}
\newcommand{\INPUT}{ \STATE {\textbf{Inputs:}} }
\newcommand{\R}[1]{\mathbb{R}^{#1}}
\newcommand{\RR}[2]{\mathbb{R}^{#1 \times #2}}
\newcommand{\G}{\mathcal{G}}
\newcommand{\NormDeux}[1]{\left\|#1\right\|_2}
\newcommand{\NormInf}[1]{\left\|#1\right\|_{\infty}}
\def \EcartTabFig {\hspace*{-0.25cm}}
\newcommand{\IntSet}[1]{[ 1;#1 ]}
\def \xib{{\boldsymbol\xi}}
\def \xibbar{{\boldsymbol{\bar \xi}}}
\def \kappab {{\boldsymbol\kappa}}
\long\def\symbolfootnote[#1]#2{\begingroup\def\thefootnote{\fnsymbol{footnote}}\footnote[#1]{#2}\endgroup} 
\newtheorem{lemma}{Lemma}
\newtheorem{proposition}{Proposition}
\begin{document}
\RRNo{7372} 
\makeRR   

\section{Introduction}
Sparse linear models have become a popular framework for dealing with various
unsupervised and supervised tasks in machine learning and signal processing.
In such models, linear combinations of small sets of variables are selected to
describe the data. Regularization by the $\ell_1$-norm has emerged as a
powerful tool for addressing this combinatorial variable selection problem,
relying on both a well-developed theory (see \cite{tsybakov} and references
therein) and efficient algorithms \cite{efron,nesterov,beck}.

The $\ell_1$-norm primarily encourages sparse solutions, regardless of the
potential structural relationships (e.g., spatial, temporal or hierarchical)
existing between the variables.  Much effort has recently been devoted to
designing sparsity-inducing regularizations capable of encoding higher-order
information about allowed patterns of non-zero
coefficients~\cite{jenatton,jacob,zhao,huang,baraniuk}, with successful
applications in bioinformatics~\cite{jacob,kim3}, topic
modeling~\cite{jenatton3} and computer vision~\cite{huang}.
 
By considering sums of norms of appropriate subsets, or \textit{groups}, of
variables, these regularizations control the sparsity patterns of the
solutions.  The underlying
optimization problem is usually difficult, in part because it involves nonsmooth
components.
Proximal methods have proven to be effective in this context, essentially
because of their fast convergence rates and their ability to deal with large
problems~\cite{nesterov,beck}.  While the settings where the
penalized groups of variables do not overlap~\cite{roth2} or are embedded in a
tree-shaped hierarchy~\cite{jenatton3} have already been studied,
sparsity-inducing regularizations of general overlapping groups have, to the
best of our knowledge, never been considered within the proximal method framework.

This paper makes the following contributions: 
\begin{itemize}
\item It shows that the \emph{proximal operator} associated with the structured norm we consider can be computed by solving a \emph{quadratic min-cost flow} problem, thereby establishing a connection
with the network flow optimization literature.
\item It presents a fast and scalable procedure for solving a large
class of structured sparse regularized problems, which, to the best of our knowledge, have not been addressed efficiently before.
\item It shows that the dual norm of the sparsity-inducing norm we consider can also be
evaluated efficiently, which enables us to compute duality gaps for the corresponding optimization problems.
\item It demonstrates that our method is relevant for various
applications, from video background subtraction
to estimation of hierarchical structures for dictionary learning of natural image~patches.
\end{itemize}

\section{Structured Sparse Models}
We consider in this paper convex optimization problems of the form
\begin{equation}
   \min_{\w \in \Real^p} f(\w) + \lambda \Omega(\w), \label{eq:formulation}
\end{equation}
where $f: \Real^p \to \Real$ is a convex differentiable function and
$\Omega: \Real^p \to \Real$ is a convex, nonsmooth, sparsity-inducing regularization function.
When one knows \emph{a priori} that the solutions of this learning problem
only have a few non-zero coefficients, $\Omega$ is often chosen to be the
$\ell_1$-norm, leading for instance to the Lasso~\cite{tibshirani}.
When these coefficients are organized in groups, a penalty encoding
explicitly this prior knowledge can improve the prediction performance
and/or interpretability of the learned models~\cite{roth2,yuan,huang2,obozinski}. Such a penalty might for example take the~form
\begin{equation}
   \Omega(\w) \, \defin\, \sum_{g \in \GG} \eta_g \max_{j\in g}|\w_j| \,=\, \sum_{g \in \GG} \eta_g \|\w_g\|_\infty, \label{eq:def_omega}
\end{equation}
where $\GG$ is a set of groups of indices, $\w_j$ denotes the $j$-th coordinate
of $\w$ for $j$ in $\IntSet{p}\defin\{1,\ldots,p\}$, the vector~$\w_g$ in $\R{|g|}$ represents 
the coefficients of $\w$ indexed by $g$ in $\GG$,
and the scalars~$\eta_g$ are positive weights. A sum of $\ell_2$-norms is also
used in the literature~\cite{zhao}, but the $\ell_\infty$-norm is piecewise
linear, a property that we take advantage of in this paper. Note that when
$\GG$ is the set of singletons of $\IntSet{p}$, we get back the $\ell_1$-norm.

If $\GG$ is a more general \emph{partition} of $\IntSet{p}$, variables are selected in
groups rather than individually.  When the groups overlap, $\Omega$ is still a
norm and sets groups of variables to zero together~\cite{jenatton}.  The latter
setting has first been considered for hierarchies~\cite{zhao,kim3,hkl}, and
then extended to general group structures~\cite{jenatton}.\footnote{Note that
other types of structured sparse models have also been introduced, either
through a different norm~\cite{jacob}, or through non-convex
criteria~\cite{huang,baraniuk}. }  Solving Eq.~(\ref{eq:formulation}) in this
context becomes challenging and is the topic of this paper.
Following~\cite{jenatton3} who tackled the case of hierarchical groups, we
propose to approach this problem with proximal methods, which we now introduce.
 
\subsection{Proximal Methods}
In a nutshell, proximal methods can be seen as a natural extension of
gradient-based techniques, and they are well suited to minimizing the sum $f+\lambda \Omega$ of two convex
terms, a smooth function~$f$ ---continuously differentiable with Lipschitz-continuous gradient--- and a potentially non-smooth function~$\lambda \Omega$ (see \cite{combette} and references therein).
At each iteration, the function $f$ is linearized at the
current estimate~$\w_0$ and the so-called
\textit{proximal} problem has to be solved:
\begin{displaymath}
    \min_{\w \in \R{p}} f(\w_0) + (\w - \w_0)^\top \nabla f(\w_0) + \lambda \Omega(\w) + {\displaystyle \frac{L}{2}}\|\w - \w_0\|_2^2.
\end{displaymath}
The quadratic term keeps the solution in a neighborhood where the current linear
approximation holds, and $L\!>\!0$ is an upper bound on the Lipschitz
constant of $\nabla f$. This problem can be rewritten~as
\begin{equation}\label{eq:prox_problem}
   \min_{\w \in \R{p}} {\displaystyle \frac{1}{2}} \NormDeux{\u-\w}^2 + \lambda'  \Omega(\w),
\end{equation}
with $\lambda' \defin \lambda / L$, and $\u \defin \w_0 - \frac{1}{L} \nabla f(\w_0)$.
We call \textit{proximal operator} associated with the regularization
$\lambda'\Omega$ the function that maps a vector~$\u$ in~$\R{p}$ onto the
(unique, by strong convexity) solution~$\w^{\star}$ of Eq.~(\ref{eq:prox_problem}). Simple proximal method use $\w^{\star}$ as the next iterate, but accelerated variants~\cite{nesterov,beck} are also based on the proximal operator
and require to solve problem (\ref{eq:prox_problem}) \textit{exactly} and \textit{efficiently} to enjoy their fast convergence rates.
Note that when $\Omega$ is the $\ell_1$-norm, the solution of Eq.~(\ref{eq:prox_problem}) is obtained by a soft-thresholding~\cite{combette}.

The approach we develop in the rest of this paper extends \cite{jenatton3} to the case of general
overlapping groups when $\Omega$ is a weighted sum of
$\ell_\infty$-norms, broadening the application of these regularizations to a wider spectrum of
problems.\footnote{For hierarchies, the approach of \cite{jenatton3}
applies also to the case of where $\Omega$ is a weighted sum of
$\ell_2$-norms.}

\section{A Quadratic Min-Cost Flow Formulation}
In this section, we show that a convex dual of problem~(\ref{eq:prox_problem}) for general overlapping groups~$\G$ can be reformulated as a \emph{quadratic min-cost flow problem}. We propose an efficient algorithm to solve
it \textit{exactly}, as well as a related algorithm to compute the dual norm of~$\Omega$.
We start by considering the dual formulation to problem~(\ref{eq:prox_problem}) introduced in~\cite{jenatton3}, for the case where $\Omega$ is a sum of 
$\ell_\infty$-norms:
\begin{lemma}
   [Dual of the proximal problem~\cite{jenatton3}]
\label{lem:dual}~\newline
   Given $\u$ in $\R{p}$, consider the problem
\begin{equation}
   \min_{ \xib \in \RR{p}{|\G|} } \frac{1}{2} \| \u - \sum_{g \in \G}
   \xib^g  \|^2_2 ~~\mbox{ s.t. }~~\forall g\in\G,\
   \|\xib^g\|_1 \leq \lambda \eta_g ~~~\mbox{ and }~~~ \, \xib^g_j = 0 \, \mbox{ if }
   \, j \notin g ,\label{eq:dual_problem}
\end{equation}
where $\xib\! =\! (\xib^g)_{g \in \G}$ is in $\RR{p}{|\G|}$, and $\xib^g_j$ denotes the $j$-th coordinate
of the vector $\xib^g$. Then, every solution $\xib^\star\! =\! (\xib^{\star g})_{g \in \G}$ of Eq.~(\ref{eq:dual_problem}) satisfies $\w^\star\! =\! \u\!-\!\sum_{g \in \G}\xib^{\star g}$, where $\w^\star$ is the solution of Eq.~(\ref{eq:prox_problem}).
\end{lemma}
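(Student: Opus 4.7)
The plan is to obtain Eq.~(\ref{eq:dual_problem}) as a Fenchel/Lagrangian dual of the proximal problem~(\ref{eq:prox_problem}) by replacing each $\ell_\infty$-norm with its dual variational characterization, then swapping the resulting min and max via a standard saddle-point argument. The primal-dual link $\w^\star=\u-\sum_g\xib^{\star g}$ will then fall out of the inner minimization in~$\w$.

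First, I would use the $\ell_\infty/\ell_1$ duality to rewrite
\begin{displaymath}
\eta_g \|\w_g\|_\infty \;=\; \max_{\xib^g}\bigl\{\,(\xib^g)^\top \w \ :\ \|\xib^g\|_1\le \eta_g,\ \xib^g_j=0 \text{ for } j\notin g\,\bigr\},
\end{displaymath}
for every $g\in\G$. Substituting this into~(\ref{eq:prox_problem}) (and absorbing $\lambda$ into the constraint to match the scaling of~(\ref{eq:dual_problem})) turns the proximal problem into the saddle-point problem
\begin{displaymath}
\min_{\w\in\R{p}}\, \max_{\xib\in\CC}\ \frac{1}{2}\|\u-\w\|_2^2 + \w^\top \sum_{g\in\G}\xib^g,
\end{displaymath}
where $\CC\defin\{\xib\in\RR{p}{|\G|}:\|\xib^g\|_1\le \lambda\eta_g \text{ and } \xib^g_j=0 \text{ for } j\notin g\}$ is a nonempty, convex, compact set.

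Next, I would invoke Sion's minimax theorem: the objective is strongly convex (hence continuous) in $\w$, concave (affine) and continuous in $\xib$, and $\CC$ is compact; this justifies interchanging $\min_\w$ and $\max_{\xib\in\CC}$. For fixed $\xib$, the inner minimization over $\w$ of the strongly convex quadratic $\frac{1}{2}\|\u-\w\|_2^2+\w^\top\sum_g\xib^g$ is solved in closed form by
\begin{displaymath}
\w(\xib)\;=\;\u-\sum_{g\in\G}\xib^g.
\end{displaymath}
Plugging this back in and simplifying (the cross terms and the quadratic in $\w(\xib)$ combine into $-\tfrac{1}{2}\|\u-\sum_g\xib^g\|_2^2$ plus a constant), the outer maximization in~$\xib$ becomes exactly the minimization problem~(\ref{eq:dual_problem}).

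Finally, because strong duality holds and the primal problem is strongly convex, the saddle point $(\w^\star,\xib^\star)$ is unique in $\w^\star$, and the relation $\w^\star=\w(\xib^\star)=\u-\sum_{g\in\G}\xib^{\star g}$ is precisely the KKT stationarity condition in~$\w$. The main obstacle is really just checking the hypotheses of the minimax exchange; the rest is bookkeeping. An alternative (and essentially equivalent) route is a direct Fenchel-dual computation applied to $\frac{1}{2}\|\u-\cdot\|_2^2$ and $\lambda\Omega$, using that the Fenchel conjugate of $\lambda\Omega$ is the indicator of the scaled polar set $\CC$; this avoids the minimax theorem but relies on the same convex-analytic content.
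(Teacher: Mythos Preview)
Your argument is correct and is the standard Fenchel/Lagrangian derivation of this dual. Note, however, that the paper does not actually prove Lemma~\ref{lem:dual}: it is stated with a citation to~\cite{jenatton3} and used as a black box, so there is no ``paper's own proof'' to compare against. Your saddle-point route (variational form of $\|\cdot\|_\infty$, Sion's theorem, closed-form minimization in $\w$) is exactly the kind of proof one expects behind such a citation, and the alternative Fenchel-dual route you mention is indeed equivalent.
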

Without loss of generality,\footnote{
Let $\xib^\star$ denote a solution of Eq.~(\ref{eq:dual_problem}).
Optimality conditions of Eq.~(\ref{eq:dual_problem}) derived in \cite{jenatton3}
show that for all $j$ in $\IntSet{p}$, the signs of the
non-zero coefficients $\xib_j^{\star g}$ for $g$ in $\GG$ are the same as the signs of the entries $\u_j$.  
To solve Eq.~(\ref{eq:dual_problem}), one can therefore flip
the signs of the negative variables $\u_j$, then solve the modified dual formulation
(with non-negative variables), which gives the magnitude of the entries
$\xib_j^{\star g}$ (the signs of these being known).}
we assume from now on that the scalars $\u_j$
are all non-negative, and we constrain the entries of~$\xib$ to be non-negative.
We now introduce a graph modeling of problem~(\ref{eq:dual_problem}).

\subsection{Graph Model}~\label{subsec:graph}
Let $G$ be a directed graph $G=(V,E,s,t)$, where $V$ is a set of vertices,
$E\subseteq V\times V$ a set of arcs, $s$ a source, and $t$ a sink. Let $c$ and
$c'$ be two functions on the arcs, $c: E \to \Real$ and $c': E \to \Real^+$, where
$c$ is a \emph{cost function} and $c'$ is a non-negative \emph{capacity
function}.  A \emph{flow} is a non-negative function on arcs that satisfies
capacity constraints on all arcs (the value of the flow on an arc is less than or equal to
the arc capacity) and conservation constraints on all vertices (the sum of
incoming flows at a vertex is equal to the sum of outgoing flows) except for
the source and the sink.

We introduce a \emph{canonical} graph $G$ associated with our optimization problem, and uniquely characterized by the following construction: \\
~(i) $V$ is the union of two sets of vertices $V_u$ and $V_{gr}$, where $V_u$ contains exactly one vertex for each index $j$ in
$\IntSet{p}$, and $V_{gr}$ contains exactly one vertex for each group~$g$ in~$\GG$. We thus have $|V|=|\GG|+p$. For simplicity, we
identify groups and indices with the vertices of the graph.  \\
~(ii) For every group $g$ in $\GG$, $E$ contains an arc $(s,g)$.
These arcs have capacity $\lambda\eta_g$ and zero cost. \\
~(iii) For every group $g$ in $\GG$, and every index $j$ in $g$, $E$
contains an arc $(g,j)$ with zero cost and infinite capacity. We
denote by $\xib_j^g$ the flow on this arc. \\
~(iv) For every index $j$ in $\IntSet{p}$, $E$ contains an arc $(j,t)$
with infinite capacity and a cost~\mbox{$c_j\!\defin\!\frac{1}{2}(\u_j-\xibbar_j)^2$}, where $\xibbar_j$ is the flow on $(j,t)$.
Note that by flow conservation, we necessarily have $\xibbar_j\!=\!\sum_{g \in \GG}{\xib_j^g}$.

Examples of canonical graphs are given in Figures \ref{subfig:grapha}-\subref{subfig:graphc}.
The flows $\xib_j^g$ associated with $G$ can now be identified with 
the variables of problem (\ref{eq:dual_problem}): indeed, the sum of the costs on the edges leading to the sink is equal
to the objective function of (\ref{eq:dual_problem}), while the capacities of the arcs $(s,g)$ match the constraints on each group. This shows that finding a flow \emph{minimizing the sum of the costs} on such a graph is
equivalent to solving problem~(\ref{eq:dual_problem}). 

When some groups are included in others, the canonical graph can be simplified to yield a graph with a smaller number of edges. Specifically,
if $h$ and $g$ are groups with $h \subset g$, the edges $(g,j)$ for $j \in h$ carrying a flow $\xib^g_j$ can be removed and replaced by a single edge $(g,h)$ of infinite capacity and zero cost, carrying the flow $\sum_{j \in h} \xib^g_j$. This simplification is illustrated in Figure~\ref{subfig:graphd}, with a 
graph equivalent to the one of Figure~\ref{subfig:graphc}.
This does not change
the optimal value of~$\xibbar^\star$, which is the quantity of interest for
computing the optimal primal variable~$\w^\star$. 
We present in Appendix~\ref{appendix:equivalent} a formal definition
of equivalent graphs.
These simplifications are useful in practice, since they reduce the
number of edges in the graph and improve the speed of the algorithms we are now
going to present.
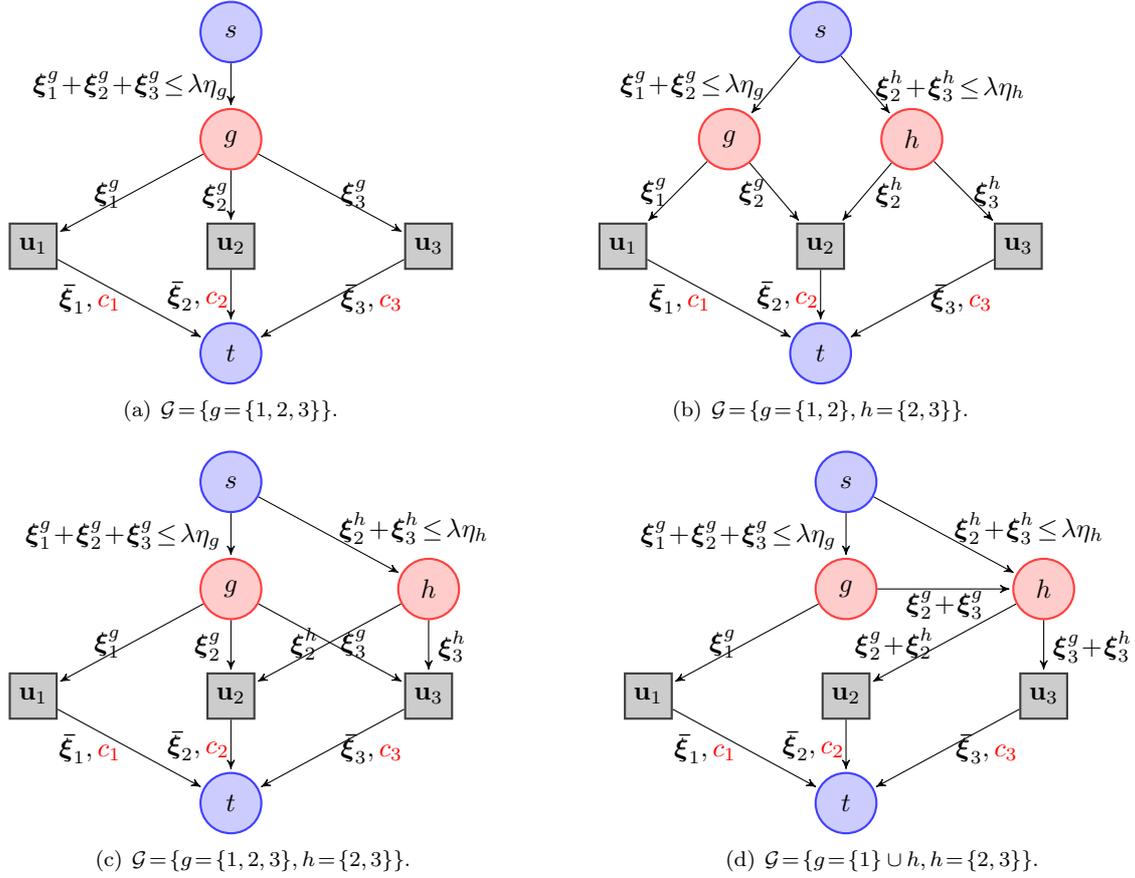
\begin{figure}[hbtp!]
\tikzstyle{source}=[circle,thick,draw=blue!75,fill=blue!20,minimum size=8mm]
\tikzstyle{sink}=[circle,thick,draw=blue!75,fill=blue!20,minimum size=8mm]
\tikzstyle{group}=[place,thick,draw=red!75,fill=red!20, minimum size=8mm]
\tikzstyle{var}=[rectangle,thick,draw=black!75,fill=black!20,minimum size=6mm]
\def\distnode{1.42cm}
\def\distnodex{2.6cm}
\tikzstyle{every label}=[red]
   \begin{center}
      \subfigure[$\GG\!=\!\{ g\!=\!\{1,2,3\} \}$.]{
      \begin{tikzpicture}[node distance=\distnode,>=stealth',bend angle=45,auto]
         \begin{scope}
            \node [source]   (s)                                    {$s$};
            \node [group]    (g1)  [below of=s]                      {$g$}
            edge  [pre] node[left,xshift=1mm] {$\xib^g_1 \!+\! \xib^g_2 \!+\! \xib^g_3 \!\leq\! \lambda \eta_g$} (s);
            \node [var] (u2) [below of=g1]                    {$\u_2$}
            edge  [pre] node[above, left,xshift=1mm] {$\xib^{g}_2$} (g1);
            \node [var] (u1)  [left of=u2, node distance=\distnodex] {$\u_1$}
            edge  [pre] node[above, left] {$\xib^{g}_1$} (g1);
            \node [var] (u3) [right of=u2, node distance=\distnodex] {$\u_3$}
            edge  [pre] node[above, right] {$\xib^{g}_3$} (g1);
            \node [sink] (si) [below of=u2] {$t$}
            edge [pre] node[above,left] {$\xibbar_1, \color{red} c_1$} (u1)
            edge [pre] node[above,left,xshift=1mm] {$\xibbar_2,\color{red} c_2$} (u2)
            edge [pre] node[above,right] {$\xibbar_3,\color{red} c_3$} (u3);
         \end{scope}
      \end{tikzpicture}\label{subfig:grapha}
      } \hfill 
      \subfigure[$\GG\!=\!\{ g\!=\!\{1,2\},h\!=\!\{2,3\} \}$.]{
      \begin{tikzpicture}[node distance=\distnode,>=stealth',bend angle=45,auto]
         \begin{scope}
            \node [source]   (s)                                    {$s$};
            \node [group]    (g)  [below of=s,xshift=-12mm]                      {$g$}
            edge  [pre] node[left] {$\xib^g_1 \!+ \!\xib^g_2 \!\leq \!\lambda \eta_g$} (s);
            \node [group]    (h)  [below of=s,xshift=12mm]                      {$h$}
            edge  [pre] node[right] {$\xib^h_2 \!+ \!\xib^h_3 \!\leq \!\lambda \eta_h$} (s);
            \node [var] (u2) [below of=g,xshift=12mm]                    {$\u_2$}
            edge  [pre] node[above, right] {$\xib^{h}_2$} (h)
            edge  [pre] node[above, left] {$\xib^{g}_2$} (g);
            \node [var] (u1)  [left of=u2, node distance=\distnodex] {$\u_1$}
            edge  [pre] node[above, left] {$\xib^{g}_1$} (g);
            \node [var] (u3) [right of=u2, node distance=\distnodex] {$\u_3$}
            edge  [pre] node[above, right] {$\xib^{h}_3$} (h);
            \node [sink] (si) [below of=u2] {$t$}
            edge [pre] node[above,left] {$\xibbar_1, \color{red} c_1$} (u1)
            edge [pre] node[above,left,xshift=1mm] {$\xibbar_2,\color{red} c_2$} (u2)
            edge [pre] node[above,right] {$\xibbar_3,\color{red} c_3$} (u3);
         \end{scope}
      \end{tikzpicture}\label{subfig:graphb}
      } ~~~~~~~~~~~\\
      \subfigure[$\GG\!=\!\{ g\!=\!\{1,2,3\},h\!=\!\{2,3\} \}$.]{
      \begin{tikzpicture}[node distance=\distnode,>=stealth',bend angle=45,auto]
         \begin{scope}
            \node [source]   (s)                                    {$s$};
            \node [group]    (g)  [below of=s]                      {$g$}
            edge  [pre] node[left] {$\xib^g_1 \!+ \!\xib^g_2 \!+ \!\xib^g_3 \!\leq \!\lambda \eta_g$} (s);
            \node [group]    (h)  [right of=g,node distance=\distnodex]                      {$h$}
            edge  [pre] node[right,yshift=1mm] {$\xib^h_2 \!+ \!\xib^h_3 \!\leq \!\lambda \eta_h$} (s);
            \node [var] (u2) [below of=g]                    {$\u_2$}
            edge  [pre] node[above, left] {$\xib^{h}_2$} (h)
            edge  [pre] node[above, left] {$\xib^{g}_2$} (g);
            \node [var] (u1)  [left of=u2, node distance=\distnodex] {$\u_1$}
            edge  [pre] node[above, left] {$\xib^{g}_1$} (g);
            \node [var] (u3) [right of=u2, node distance=\distnodex] {$\u_3$}
            edge  [pre] node[above, right] {$\xib^{g}_3$} (g)
            edge  [pre] node[above, right] {$\xib^{h}_3$} (h);
            \node [sink] (si) [below of=u2] {$t$}
            edge [pre] node[above,left] {$\xibbar_1, \color{red} c_1$} (u1)
            edge [pre] node[above,left,xshift=1mm] {$\xibbar_2,\color{red} c_2$} (u2)
            edge [pre] node[above,right] {$\xibbar_3,\color{red} c_3$} (u3);
         \end{scope}
      \end{tikzpicture}\label{subfig:graphc}
      } \hfill
      \subfigure[$\GG\!=\!\{ g\!=\!\{1\}\cup h,h\!=\!\{2,3\} \}$.]{
      \begin{tikzpicture}[node distance=\distnode,>=stealth',bend angle=45,auto]
         \begin{scope}
            \node [source]   (s)                                    {$s$};
            \node [group]    (g)  [below of=s]                      {$g$}
            edge  [pre] node[left] {$\xib^g_1 \!+ \!\xib^g_2 \!+ \!\xib^g_3 \!\leq \!\lambda \eta_g$} (s);
            \node [group]    (h)  [right of=g, node distance=\distnodex]                      {$h$}
            edge  [pre] node[right,yshift=1mm] {$\xib^h_2 \!+ \!\xib^h_3 \!\leq \!\lambda \eta_h$} (s)
            edge  [pre] node[below,yshift=1mm] {$\xib^g_2 \!+ \!\xib^g_3$} (g);
            \node [var] (u2) [below of=g]                    {$\u_2$}
            edge  [pre] node[above,left] {$\xib^g_2\!+\!\xib^{h}_2$} (h);
            \node [var] (u1)  [left of=u2, node distance=\distnodex] {$\u_1$}
            edge  [pre] node[above, left] {$\xib^{g}_1$} (g);
            \node [var] (u3) [right of=u2, node distance=\distnodex] {$\u_3$}
            edge  [pre] node[above, right] {$\xib^g_3\!+\!\xib^{h}_3$} (h);
            \node [sink] (si) [below of=u2] {$t$}
            edge [pre] node[above,left] {$\xibbar_1, \color{red} c_1$} (u1)
            edge [pre] node[above,left,xshift=1mm] {$\xibbar_2,\color{red} c_2$} (u2)
            edge [pre] node[above,right] {$\xibbar_3,\color{red} c_3$} (u3);
         \end{scope}
      \end{tikzpicture} \label{subfig:graphd}
      } 
   \end{center}
   \caption{Graph representation of simple proximal problems with different
   group structures $\GG$. The three indices $1,2,3$ 
   are represented as grey squares, and the groups
   $g,h$ in $\GG$ as red discs. The source is
   linked to every group $g,h$ with respective maximum capacity
   $\lambda\eta_g,\lambda\eta_h$ and zero cost. Each variable $\u_j$ is linked to the sink
   $t$, with an infinite capacity, and with a cost
   $c_j\!\defin\!\frac{1}{2}(\u_j-\xibbar_j)^2$. All other arcs in the graph have
   zero cost and infinite capacity. They represent inclusion relations in-between groups, and between groups and variables.
   The graphs
   \subref{subfig:graphc} and \subref{subfig:graphd} correspond to a special case of
   tree-structured hierarchy in the sense of \cite{jenatton3}. Their min-cost
   flow problems are equivalent.} \label{fig:graphs}
\end{figure}
\subsection{Computation of the Proximal Operator}\label{subsec:prox}
Quadratic min-cost flow problems have been well studied in the operations
research literature~\cite{hochbaum}. One of the simplest
cases, where~$\GG$ contains a single group $g$ as in Figure~\ref{subfig:grapha},
can be solved by an orthogonal projection on the $\ell_1$-ball of
radius~$\lambda\eta_g$. 
It has been shown, both in machine learning~\cite{duchi} and operations research~\cite{hochbaum,brucker}, that such a projection can be done in $O(p)$ operations. 
When the group structure is a tree 
as in Figure~\ref{subfig:graphd}, strategies developed in
the two communities are also similar~\cite{jenatton3,hochbaum}, and solve the
problem in $O(p d)$ operations, where $d$ is the depth of the tree.

The general case of overlapping groups is more difficult. Hochbaum and Hong have shown
in~\cite{hochbaum} that \emph{quadratic min-cost flow problems} can be reduced to a specific
\emph{parametric max-flow} problem, for which an efficient algorithm
exists~\cite{gallo}.\footnote{By definition, a parametric max-flow problem
consists in solving, for every value of a parameter, a max-flow problem on a
graph whose arc capacities depend on this parameter.} While this approach
could be used to solve Eq.~(\ref{eq:dual_problem}), it ignores the fact that
our graphs have non-zero costs only on edges leading to the sink. 
To take advantage of this specificity, we propose the dedicated Algorithm~\ref{algo:prox}. 
Our method clearly shares some similarities with a simplified version of~\cite{gallo} presented in~\cite{babenko}, namely a 
divide and conquer strategy. Nonetheless, we performed an empirical comparison described in Appendix~\ref{appendix:exp}, 
which shows that our dedicated algorithm has significantly better performance in practice.
\begin{algorithm}[hbtp]
\caption{Computation of the proximal operator for overlapping groups.}\label{algo:prox}
\begin{algorithmic}[1]
\INPUT $\u \in \R{p}$, a set of groups $\GG$, positive weights
$(\eta_g)_{g\in\GG}$, and $\lambda$ (regularization parameter).
\STATE Build the initial graph $G_0=(V_0,E_0,s,t)$ as explained in Section \ref{subsec:prox}.
\STATE Compute the optimal flow: $\xibbar \leftarrow \text{\texttt{computeFlow}}(V_0,E_0)$.
\STATE {\bf{Return:}} $\w = \u-\xibbar$ (optimal solution of the proximal problem).
\end{algorithmic}
\vspace*{0.1cm}
{\bf Function} \texttt{computeFlow}($V=V_u \cup V_{gr},E$)
\begin{algorithmic}[1]
\STATE Projection step: $\gammab \leftarrow
\argmin_\gammab \sum_{j \in V_u}\frac{1}{2}(\u_j -\gammab_j)^2 \st  \sum_{j \in V_u} \gammab_j \leq \lambda\sum_{g \in V_{gr}}\eta_g.$
\STATE For all nodes $j$ in $V_u$, set $\gammab_j$ to be the capacity of the arc $(j,t)$.
\STATE Max-flow step: Update $(\xibbar_j)_{j \in V_u}$ by computing a max-flow on the graph $(V,E,s,t)$.
\IF{ $\exists~j \in V_u \st \xibbar_j \neq \gammab_j$}
\STATE Denote by $(s,V^+)$ and $(V^-,t)$ the two disjoint subsets of $(V,s,t)$ separated by
the minimum $(s,t)$-cut of the graph, and remove the arcs between $V^+$ and $V^-$. 
Call $E^+$ and $E^-$ the two remaining disjoint subsets of~$E$ corresponding to $V^+$ and $V^-$.
\STATE $(\xibbar_j)_{j \in V_u^+} \leftarrow \text{\texttt{computeFlow}}(V^+,E^+)$.
\STATE $(\xibbar_j)_{j \in V_u^-} \leftarrow \text{\texttt{computeFlow}}(V^-,E^-)$.
\ENDIF
\STATE {\bf{Return:}} $(\xibbar_j)_{j \in V_u}$.
\end{algorithmic}
\end{algorithm}
Informally, \texttt{computeFlow}$(V_0,E_0)$ returns the optimal flow vector $\xibbar$,
proceeding as follows: This function first solves a relaxed version of problem
Eq.~(\ref{eq:dual_problem}) obtained by replacing the sum of the vectors~$\xib^g$ by a
single vector $\gammab$ whose $\ell_1$-norm
should be less than, or equal to, the sum of the constraints on the
vectors~$\xib^g$. The optimal vector $\gammab$ therefore gives a lower bound
$||\u-\gammab||_2^2/2$ on the optimal cost.  Then, the maximum-flow step~\cite{goldberg} tries to find a feasible flow such that
the vector $\xibbar$ matches $\gammab$.  If~$\xibbar=\gammab$, then the cost of the flow
reaches the lower bound, and the flow is optimal.
If $\xibbar \neq \gammab$, the lower bound cannot be reached, and we construct a
minimum $(s,t)$-cut of the graph~\cite{ford} that defines two disjoints sets of
nodes $V^+$ and $V^-$; $V^+$ is the part of the graph that can potentially 
receive more flow from the source, whereas all arcs linking $s$ to $V^-$ are 
saturated. The properties of a min $(s,t)$-cut~\cite{bertsekas2} imply that there are
no arcs from $V^+$ to $V^-$ (arcs inside $V$ have infinite capacity by
construction), and that there is no flow on arcs from~$V^-$ to~$V^+$.
At this point, it is possible to show that the value of the optimal min-cost
flow on these arcs is also zero. Thus, removing them yields an equivalent optimization problem, 
which can be decomposed into
two independent problems of smaller size and 
solved recursively by the calls to \texttt{computeFlow}$(V^+,E^+)$ and
\texttt{computeFlow}$(V^-,E^-)$. 
Note that when $\Omega$ is the $\ell_1$-norm, our algorithm solves
problem~(\ref{eq:dual_problem}) during the first projection step in line $1$
and stops.
A formal proof of correctness of Algorithm~\ref{algo:prox}
and further details are relegated to Appendix~\ref{appendix:convergence}.

The approach of \cite{hochbaum,gallo} is
guaranteed to have the same worst-case complexity as a single max-flow
algorithm. However, we have experimentally observed a significant discrepancy
between the worst case and empirical complexities for these flow problems,
essentially because the empirical cost of each max-flow is significantly smaller
than its theoretical cost.  Despite the fact that the worst-case guarantee of our algorithm 
is weaker than their (up to a factor $|V|$), it is more adapted to the
structure of our graphs and has proven to be much faster in our experiments
(see supplementary material).

Some implementation details are crucial to the efficiency of the algorithm: 
\begin{itemize}
   \item \textbf{Exploiting maximal connected components}: When there exists no
arc between two subsets of $V$, it is possible to process them independently 
to solve the global min-cost flow problem. 
To that effect, before calling
the function \texttt{computeFlow}($V,E$), we look for maximal connected components
$(V_1,E_1),\ldots,(V_N,E_N)$ and call sequentially the procedure
\texttt{computeFlow}($V_i,E_i$) for $i$ in $\IntSet{N}$. 
\item  \textbf{Efficient max-flow algorithm}: We have implemented the
``push-relabel'' algorithm of \cite{goldberg} to solve our max-flow
problems, using classical heuristics that significantly speed it up in practice
(see \cite{goldberg,cherkassky}).
 Our implementation uses the so-called ``highest-active vertex
selection rule, global and gap heuristics'' (see \cite{goldberg,cherkassky}),
and has a worst-case complexity of $O(|V|^2 |E|^{1/2})$ for a graph
$(V,E,s,t)$.
This algorithm leverages the concept of \emph{pre-flow} that relaxes the
definition of flow and allows vertices to have a positive excess.
\item \textbf{Using flow warm-restarts}: 
   Our algorithm can be initialized with any valid pre-flow, enabling warm-restarts when
the max-flow is called several times as in our algorithm.
 \item \textbf{Improved projection step}:
The first line of the procedure \texttt{computeFlow} can be replaced by 
$\gammab \leftarrow \argmin_\gammab \sum_{j \in V_u}\frac{1}{2}(\u_j
-\gammab_j)^2 \st  \sum_{j \in V_u} \gammab_j \leq \lambda\sum_{g \in V_{gr}}\eta_g ~\text{and}~
|\gammab_j| \leq \lambda\sum_{g \ni j}\eta_g.$
The idea is that the structure of the graph will not allow $\xibbar_j$ to be greater
than $\lambda\sum_{g \ni j}\eta_g$ after the max-flow step. 
Adding these additional constraints leads to better performance when the graph
is not well balanced.  This modified
projection step can still be computed in linear time~\cite{brucker}.
\end{itemize}

\subsection{Computation of the Dual Norm}\label{subsec:dual}
The dual norm $\Omega^*$ of $\Omega$, defined for any vector $\kappab$ in $\R{p}$ by 
$\Omega^*(\kappab)\defin\max_{\Omega(\z)\leq 1}\z^\top\kappab$, 
is a key quantity to study sparsity-inducing regularizations \cite{jenatton, hkl, negahban2009unified}.
We use it here to monitor the convergence of the proximal method through a duality gap,
and define a proper optimality criterion for problem~(\ref{eq:formulation}).
We denote by $f^*$ the Fenchel conjugate of $f$~\cite{borwein}, defined by $f^*(\kappab)\defin\sup_{\z} [\z^\top\kappab - f(\z)]$.
The duality gap for problem~(\ref{eq:formulation}) can be derived from standard Fenchel duality arguments \cite{borwein} and it is equal to
$
 f(\w)+ \lambda \Omega(\w) + f^*(-\kappab)\ \text{for}\ \w,\kappab\ \text{in}\ \R{p}\ \text{with}\
\Omega^*(\kappab)\leq \lambda
$.
Therefore, evaluating the duality gap requires to compute efficiently $\Omega^*$
in order to find a feasible dual variable~$\kappab$.
This is equivalent to solving another network flow problem, based on the following variational formulation:
\begin{equation}
\Omega^*(\kappab) = \!\!\! \min_{\xib\in\RR{p}{|\G|}} \!\!\! \tau \quad \text{s.t.}\quad \sum_{g\in\G}\xib^g=\kappab,\ \text{and}\ \forall g\in\G,\ \|\xib^g\|_1 \leq \tau\eta_g ~~\text{with}~~\
\xib_j^g=0\ \text{if}\ j\notin g. \label{eq:dual_norm}
\end{equation}
In the network problem associated with~(\ref{eq:dual_norm}), the capacities on the arcs $(s,g)$,
$g \in \GG$, are set to $\tau\eta_g$, and the capacities on the arcs $(j,t)$, $j$ in
$\IntSet{p}$, are fixed to $\kappab_j$.
Solving problem~(\ref{eq:dual_norm}) amounts to finding the smallest value of $\tau$, such
that there exists a flow saturating the capacities $\kappab_j$ on the arcs
leading to the sink~$t$ (i.e., $\xibbar=\kappab$). 
Equration~(\ref{eq:dual_norm}) and the algorithm below 
are proven to be correct in Appendix~\ref{appendix:convergence}.
\begin{algorithm}[!hbtp]
\caption{Computation of the dual norm.}\label{algo:dual_norm}
\begin{algorithmic}[1]
\INPUT $\kappab \in \R{p}$, a set of groups $\GG$, positive weights
$(\eta_g)_{g\in\GG}$.
\STATE Build the initial graph $G_0=(V_0,E_0,s,t)$ as explained in Section~\ref{subsec:dual}.
\STATE $\tau \leftarrow \text{\texttt{dualNorm}}(V_0,E_0)$.
\STATE {\bf{Return:}} $\tau$ (value of the dual norm).
\end{algorithmic}
\vspace*{0.1cm}
{\bf Function} \texttt{dualNorm}($V = V_u \cup V_{gr},E$)
\begin{algorithmic}[1]
\STATE $\tau\! \leftarrow \!(\sum_{j \in V_u} \kappab_j) / (\sum_{g \in V_{gr}} \eta_g)$ and set the capacities of arcs $(s,g)$ to $\tau\eta_g$ for all $g$ in~$V_{gr}$.
\STATE Max-flow step: Update $(\xibbar_j)_{j \in V_u}$ by computing a max-flow on the graph $(V,E,s,t)$.
\IF{ $\exists~j \in V_u \st \xibbar_j \neq \kappab_j$}
\STATE Define $(V^+,E^+)$ and $(V^-,E^-)$ as in Algorithm~\ref{algo:prox}, and set $\tau \leftarrow \text{\texttt{dualNorm}}(V^-,E^-)$.
\ENDIF
\STATE {\bf {Return:} } $\tau$.
\end{algorithmic}
\end{algorithm}

\section{Applications and Experiments} \label{sec:exp}
Our experiments use the algorithm of~\cite{beck} based on our proximal operator, with weights $\eta_g$ set to~$1$. We present this algorithm in more details in Appendix~\ref{appendix:fista}.
\subsection{Speed Comparison}
We compare our method (ProxFlow) and two generic optimization techniques,
namely a subgradient descent (SG) and an interior point
method,\footnote{In our simulations, we use the commercial software
\texttt{Mosek},  \texttt{http://www.mosek.com/}} on a regularized linear
regression problem.  Both SG and ProxFlow are implemented in \texttt{C++}.
Experiments are run on a single-core $2.8$ GHz CPU.  We consider a design
matrix $\X$ in $\Real^{n \times p}$ built from overcomplete dictionaries
of discrete cosine transforms (DCT), which are naturally organized on one-
or two-dimensional grids and display local correlations.  The following
families of groups $\G$ using this spatial information are thus
considered: (1) every contiguous sequence of length $3$ for the
one-dimensional case, and (2) every $3\!\times\!3$-square in the
two-dimensional setting.  We generate vectors $\y$ in $\R{n}$ according to
the linear model $\y = \X\w_0 + \varepsilonb$, 
where $\varepsilonb \sim \N(0,0.01\|\X\w_0\|_2^2)$.  The vector $\w_0$ has
about $20\%$ percent nonzero components, randomly selected, while
respecting the structure of $\G$, and uniformly generated between
$[-1,1]$.

In our experiments, the regularization parameter $\lambda$ is chosen to
achieve this level of sparsity.  For SG, we take the step size to be equal
to $a/(k+b)$, where $k$ is the iteration number, and $(a,b)$ are the best
parameters selected in $\{10^{-3},\dots,10\}\!\times\!\{10^2,10^3,10^4\}$.
For the interior point methods, since problem~(\ref{eq:formulation}) can
be cast either as a quadratic (QP) or as a conic program (CP), we show in
Figure~\ref{fig:speed_cmp} the results for both formulations.  Our
approach compares favorably with the other methods, on three problems of
different sizes, $(n,p)\in\{(100,10^3),(1024,10^4),(1024,10^5)\}$, see
Figure~\ref{fig:speed_cmp}.  In addition, note that QP, CP and SG do not
obtain sparse solutions, whereas ProxFlow does.  We have also run ProxFlow
and SG on a larger dataset with $(n,p)=(100,10^6)$: after $12$ hours,
ProxFlow and SG have reached a relative duality gap of $0.0006$ and $0.02$
respectively.\footnote{Due to the computational burden, QP and CP could
not be run on every problem.}
\begin{figure}[hbtp]
    \centering
   \includegraphics[width=0.31\textwidth]{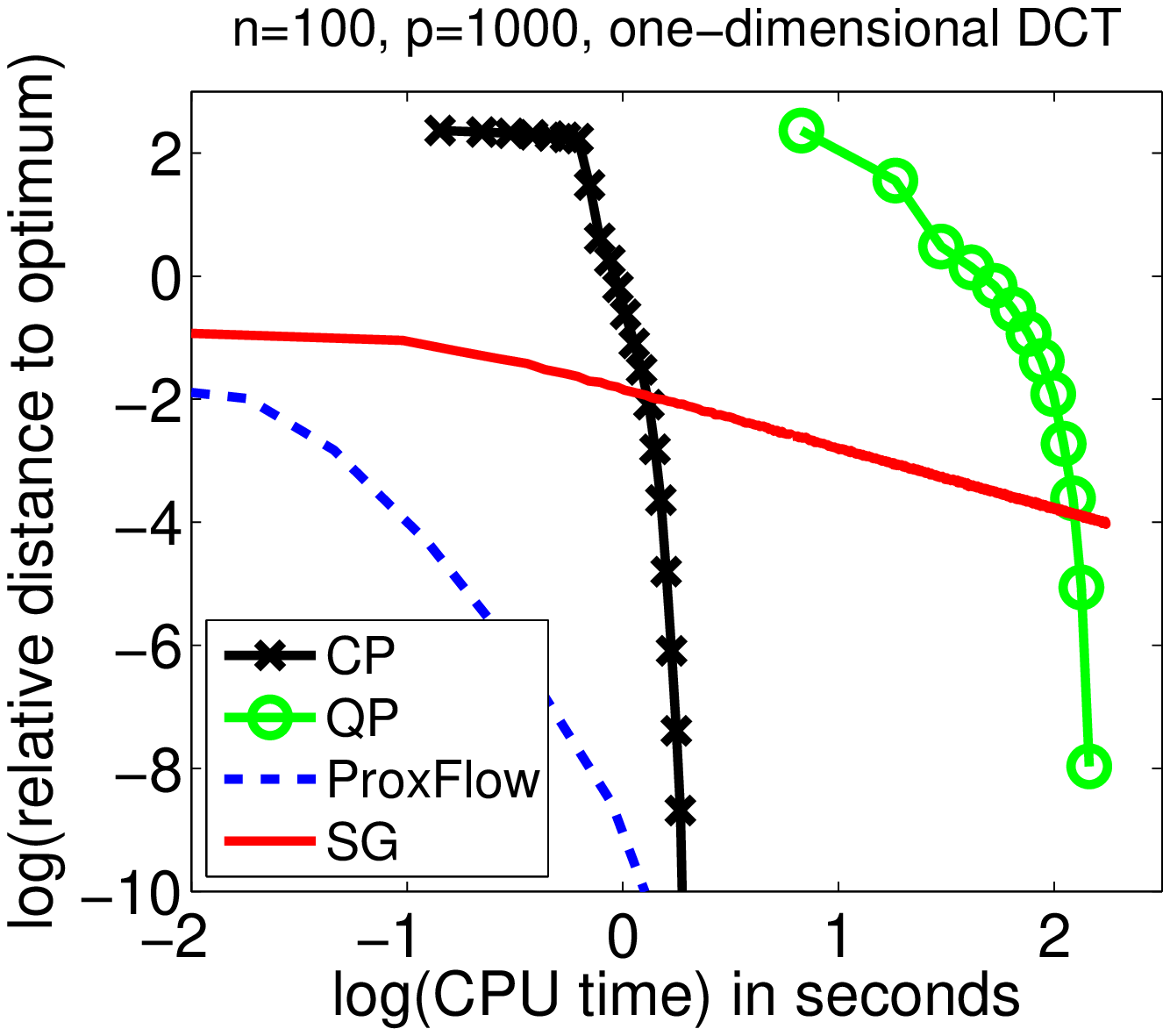} \hfill
   \includegraphics[width=0.31\textwidth]{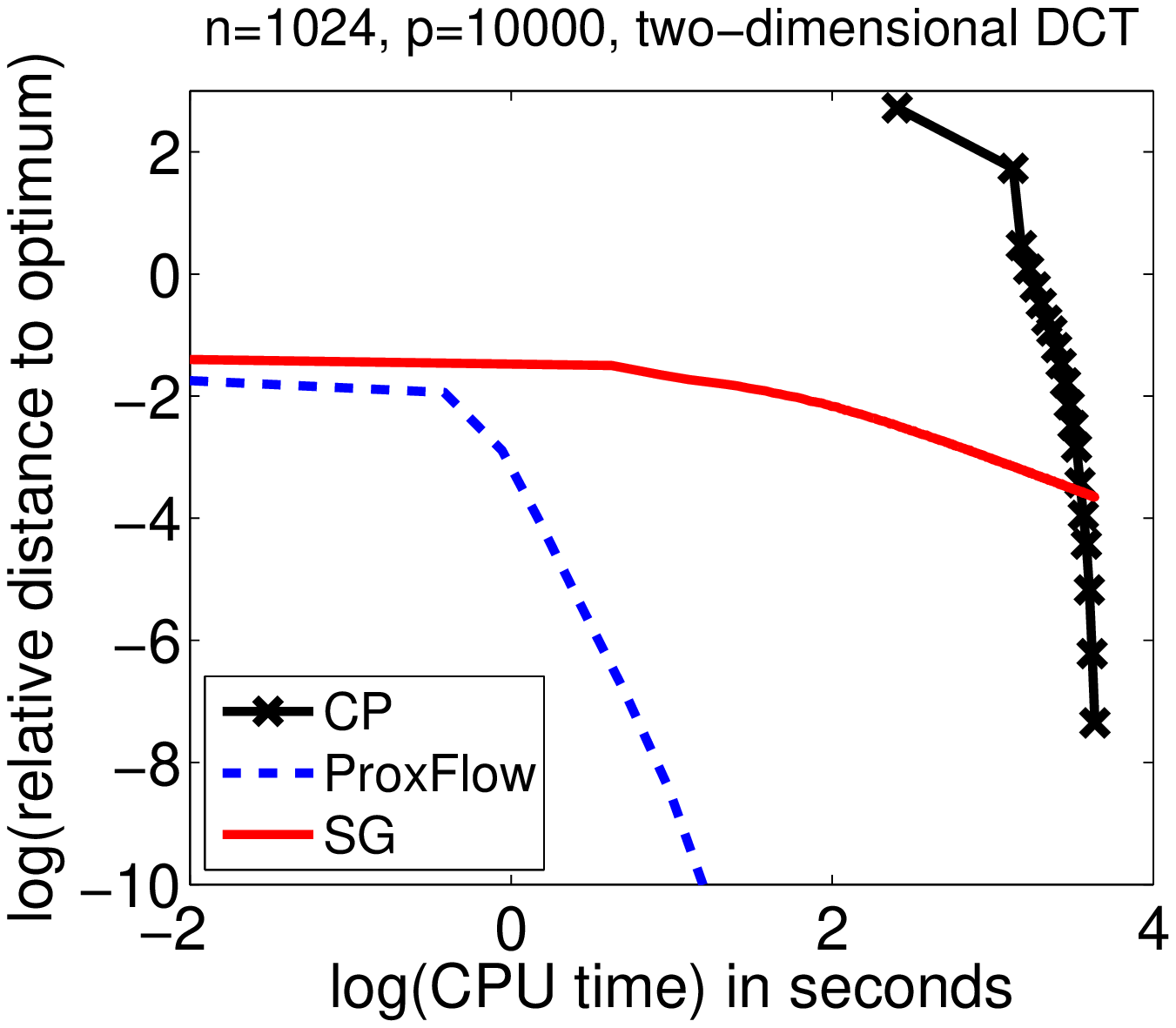} \hfill
   \includegraphics[width=0.31\textwidth]{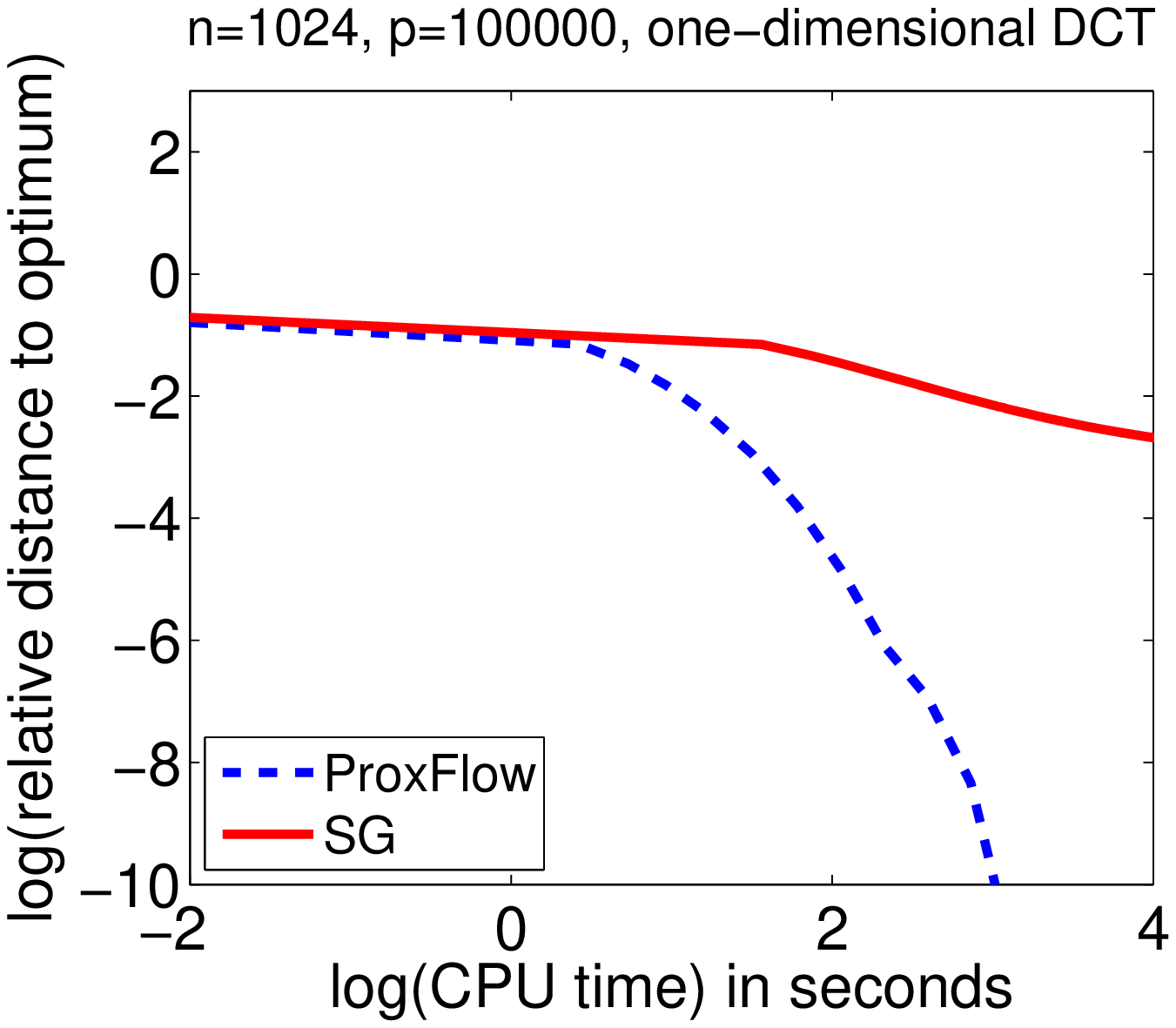} 
   \caption{Speed comparisons: distance to the optimal primal value versus CPU time (log-log scale). Due to the computational burden, QP and CP could not be run on every problem.} 
\label{fig:speed_cmp}
\end{figure}
\subsection{Background Subtraction}
Following \cite{huang}, we consider a background subtraction task. Given a sequence of frames from a fixed camera,
we try to segment out foreground objects in a new image. 
If we denote by $\y\in\R{n}$ this image composed of $n$ pixels, 
we model $\y$ as a sparse linear combination of $p$ other images $\X\in\RR{n}{p}$, plus an error term $\e$ in $\R{n}$,
i.e., $\y \approx \X \w + \e$ for some sparse vector $\w$ in $\R{p}$. 
This approach is reminiscent of \cite{yima} in the context of face recognition, 
where $\e$ is further made sparse to deal with small occlusions.
The term $\X \w$ accounts for \textit{background} parts present in both~$\y$ and~$\X$, while $\e$ contains specific, 
or \textit{foreground}, objects in $\y$.
The resulting optimization problem is
$
\min_{\w,\e} \frac{1}{2} \|\y \! - \! \X\w \! - \! \e\|_2^2 + \lambda_1\|\w\|_1 + \lambda_2\|\e\|_1,\ \text{with}\ \lambda_1,\lambda_2 \geq 0.
$
In this formulation, the $\ell_1$-norm penalty on $\e$ does not take into account the fact that
neighboring pixels in $\y$ are likely to share the same label (background or foreground), 
which may lead to scattered pieces of foreground and background regions (Figure~\ref{fig:background_sub}).
We therefore put an additional structured regularization term $\Omega$ on $\e$, where the groups in $\G$ are all the overlapping $3\!\times\!3$-squares on the image.
A dataset with hand-segmented evaluation images is used to illustrate
the effect of $\Omega$.\footnote{{\scriptsize\texttt{http://research.microsoft.com/en-us/um/people/jckrumm/wallflower/testimages.htm}}}
For simplicity, we use a single regularization parameter, i.e., $\lambda_1=\lambda_2$, chosen to maximize the number of pixels matching the ground truth.
We consider $p=200$ images with $n=57600$ pixels (i.e., a resolution of $120\!\times\!160$, times 3 for the RGB channels).
As shown in Figure~\ref{fig:background_sub}, adding $\Omega$ improves the background subtraction results for the two tested images, 
by removing the scattered artifacts due to the lack of structural constraints of the $\ell_1$-norm, which encodes neither spatial nor color consistency. 
 \begin{figure}[hbtp]
 \begin{center}
 \begin{tabular}{ccccc}
  \includegraphics[width=0.19\textwidth]{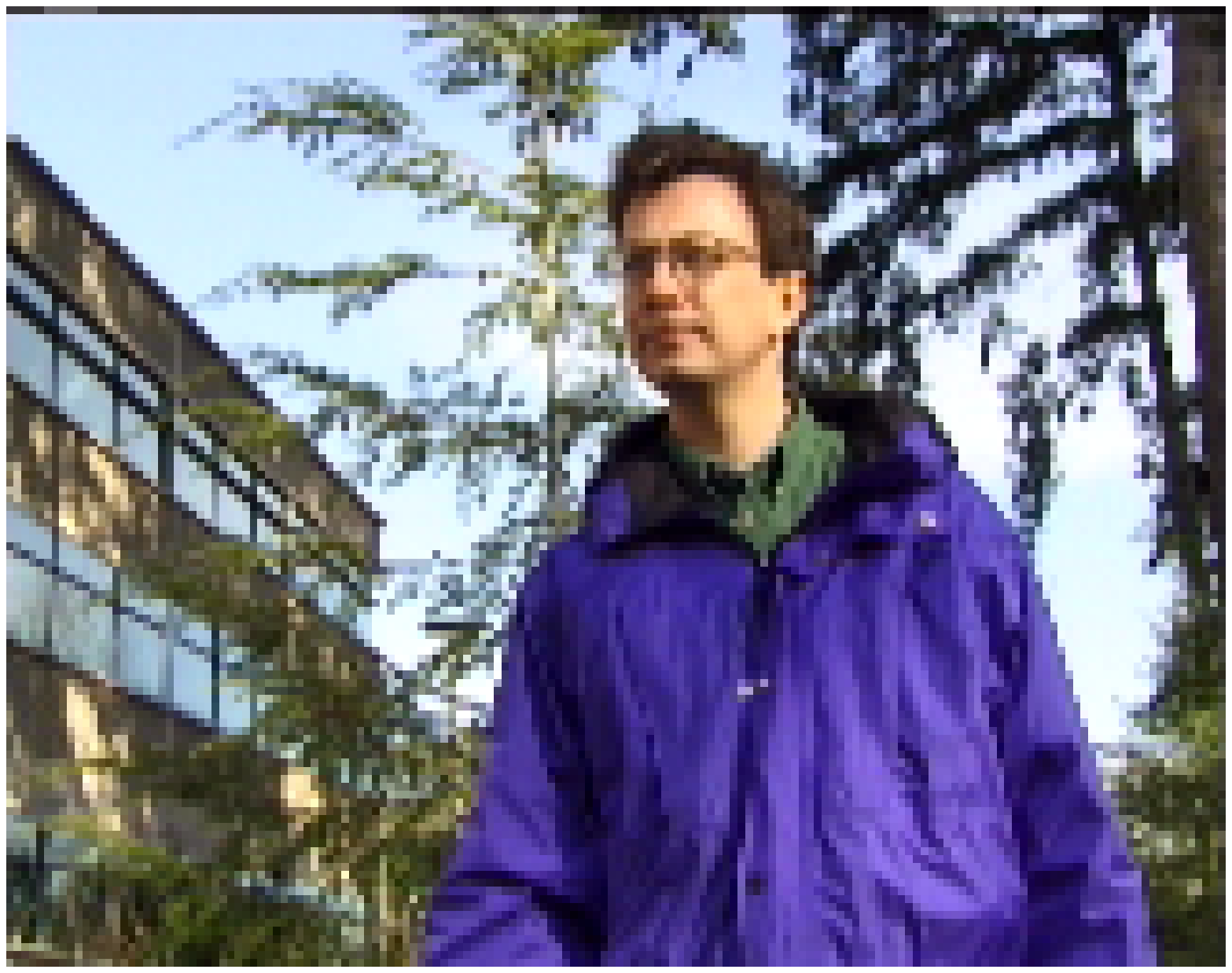} & \EcartTabFig
  \includegraphics[width=0.19\textwidth]{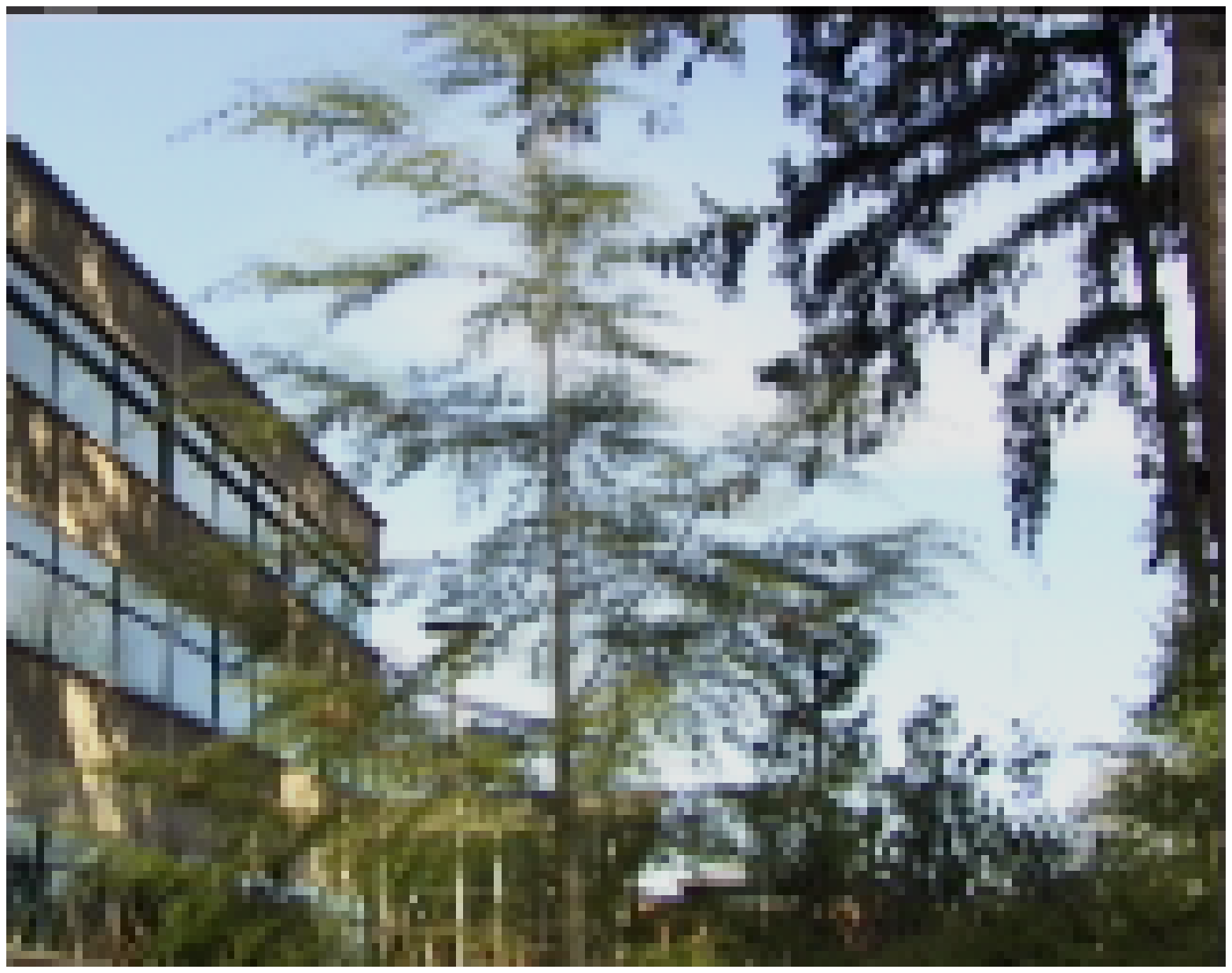} & \EcartTabFig
  \includegraphics[width=0.19\textwidth]{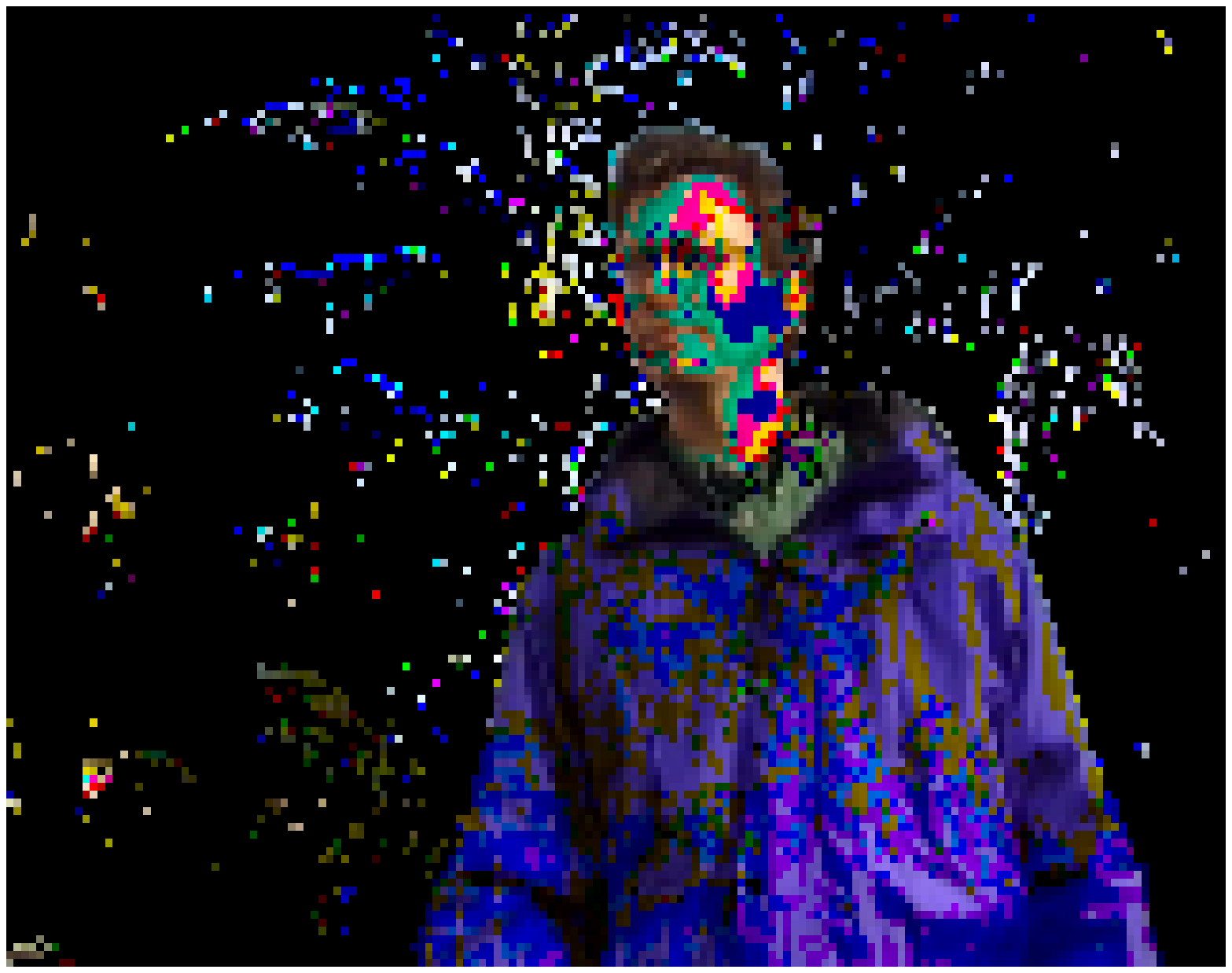}& \EcartTabFig
  \includegraphics[width=0.19\textwidth]{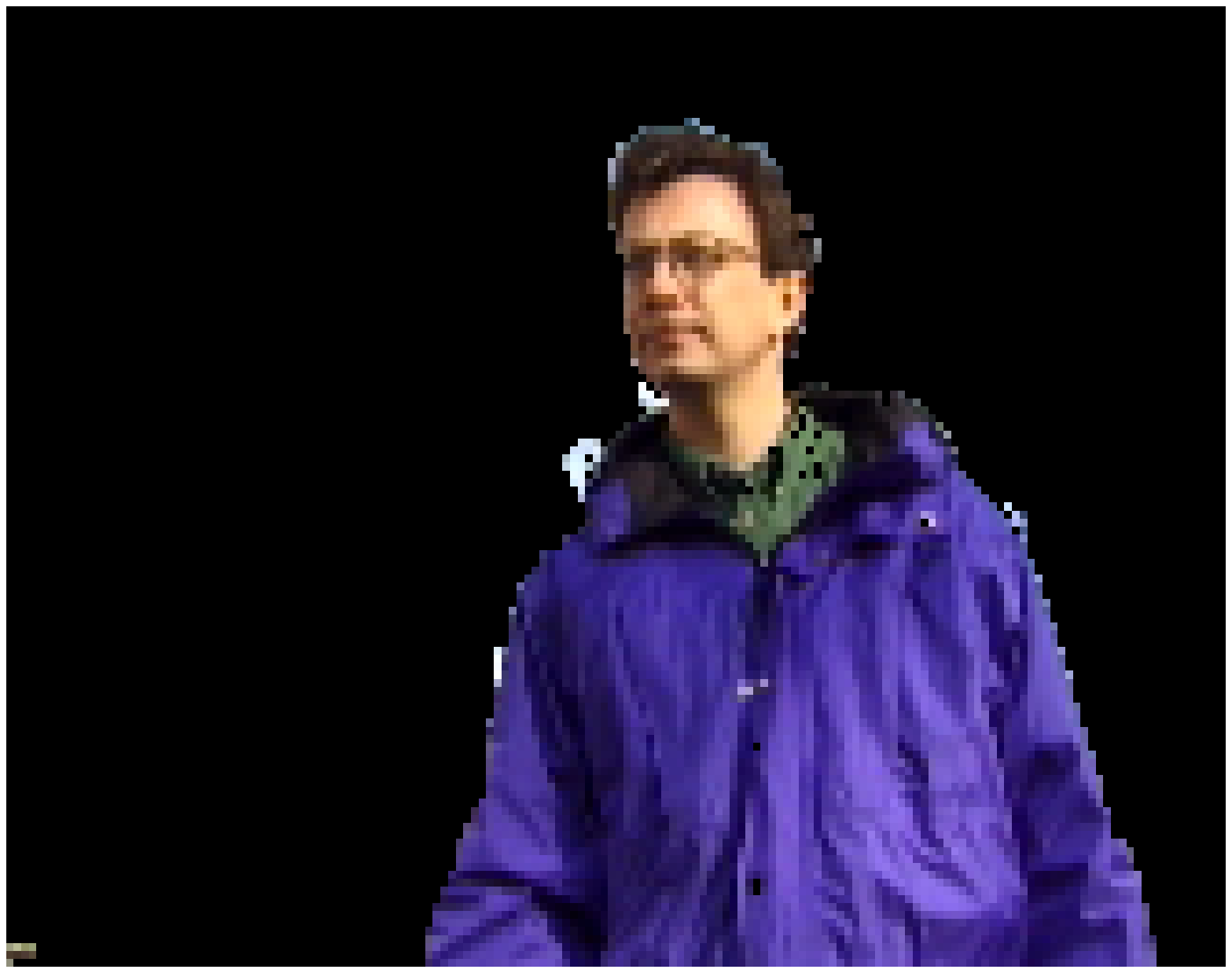} & \EcartTabFig
  \includegraphics[width=0.19\textwidth]{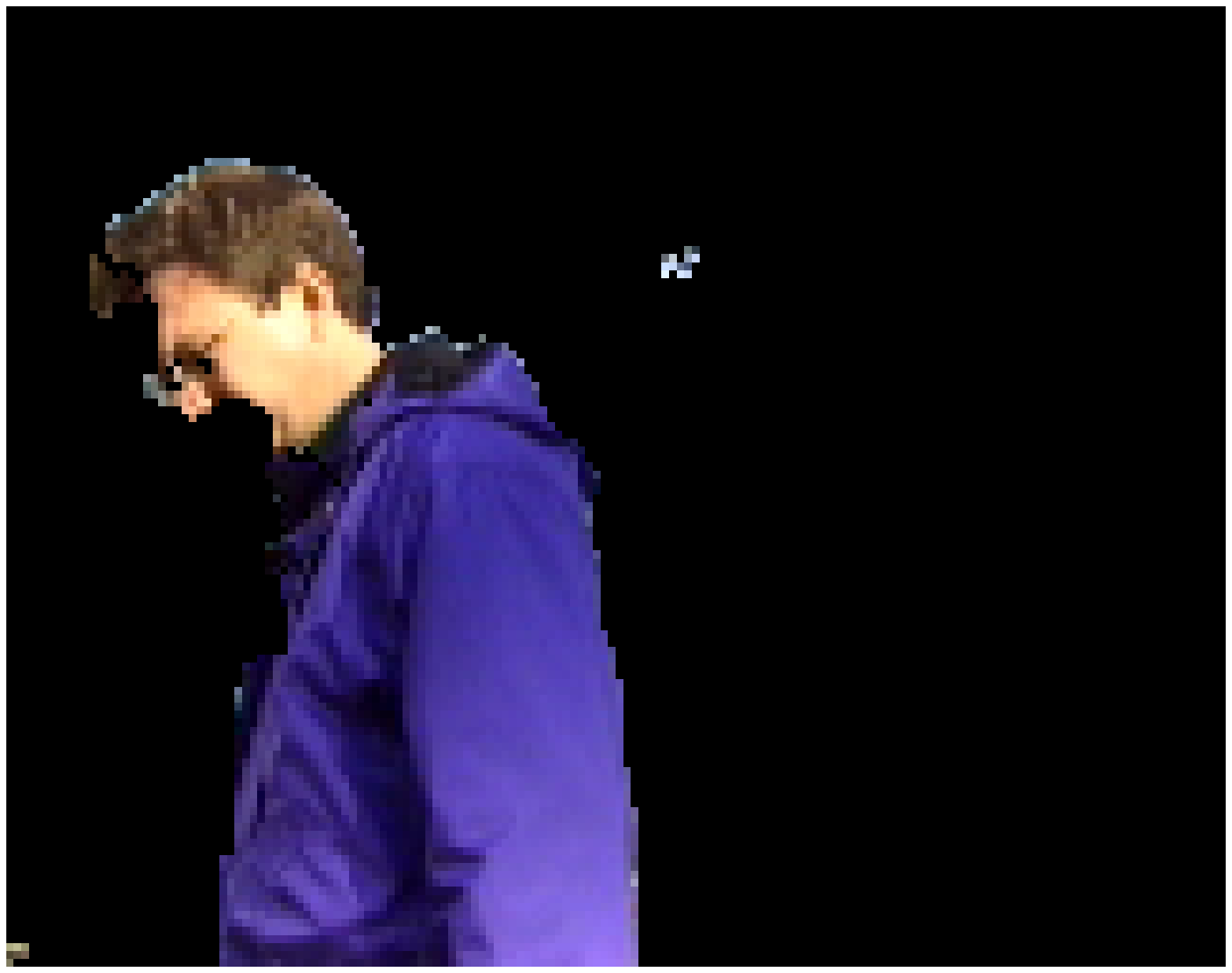} \\
  \includegraphics[width=0.19\textwidth]{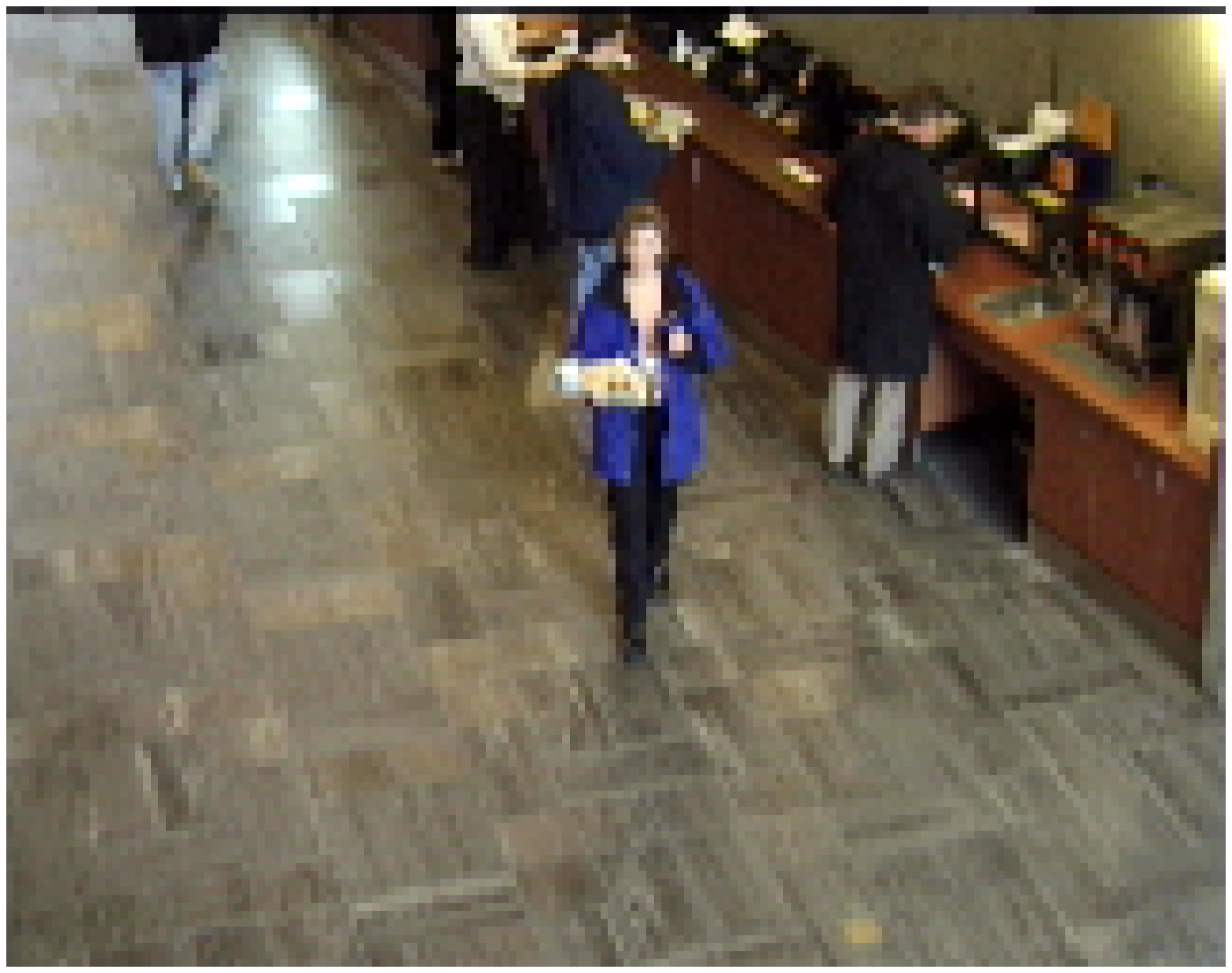} & \EcartTabFig
  \includegraphics[width=0.19\textwidth]{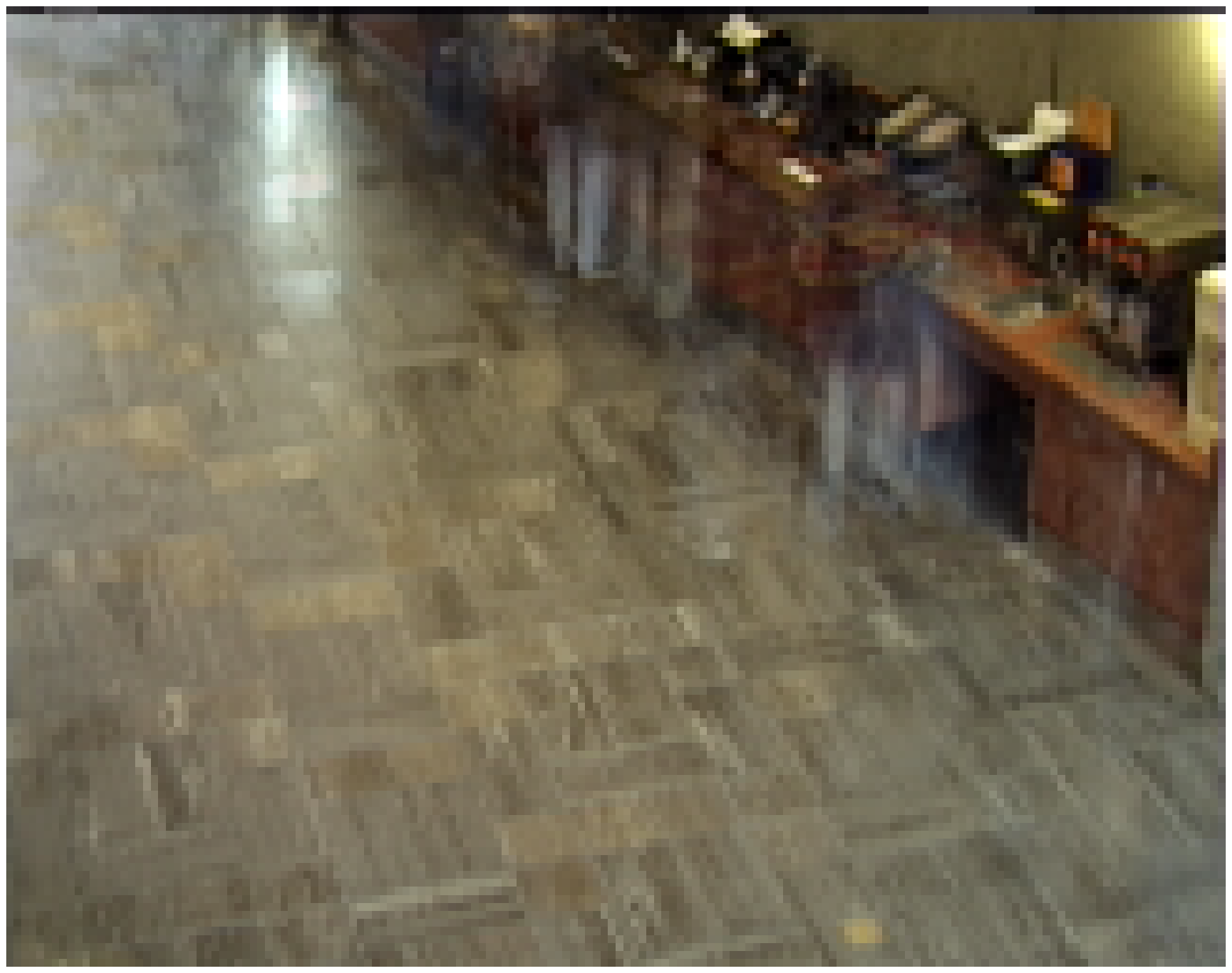} & \EcartTabFig
  \includegraphics[width=0.19\textwidth]{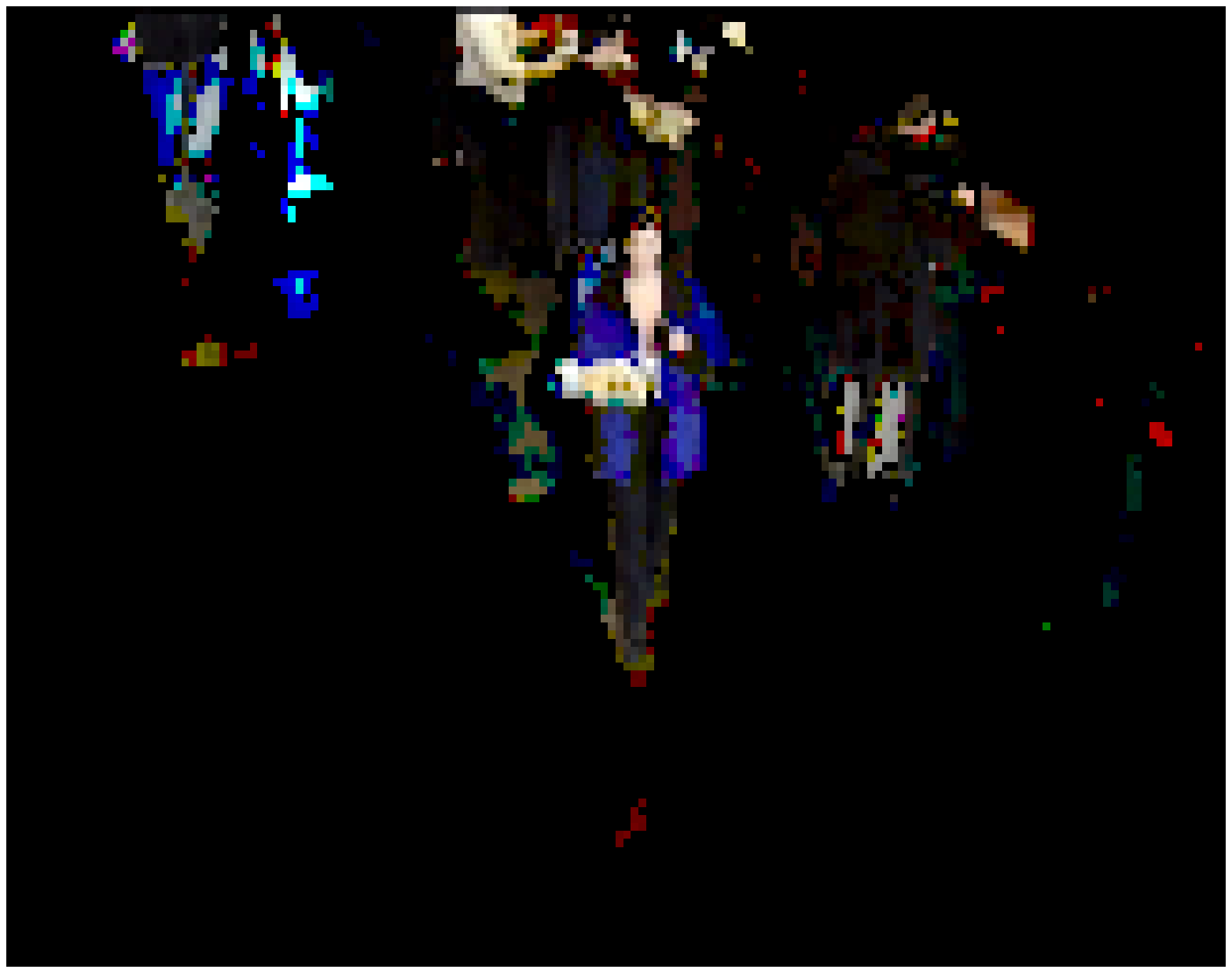} &\EcartTabFig
  \includegraphics[width=0.19\textwidth]{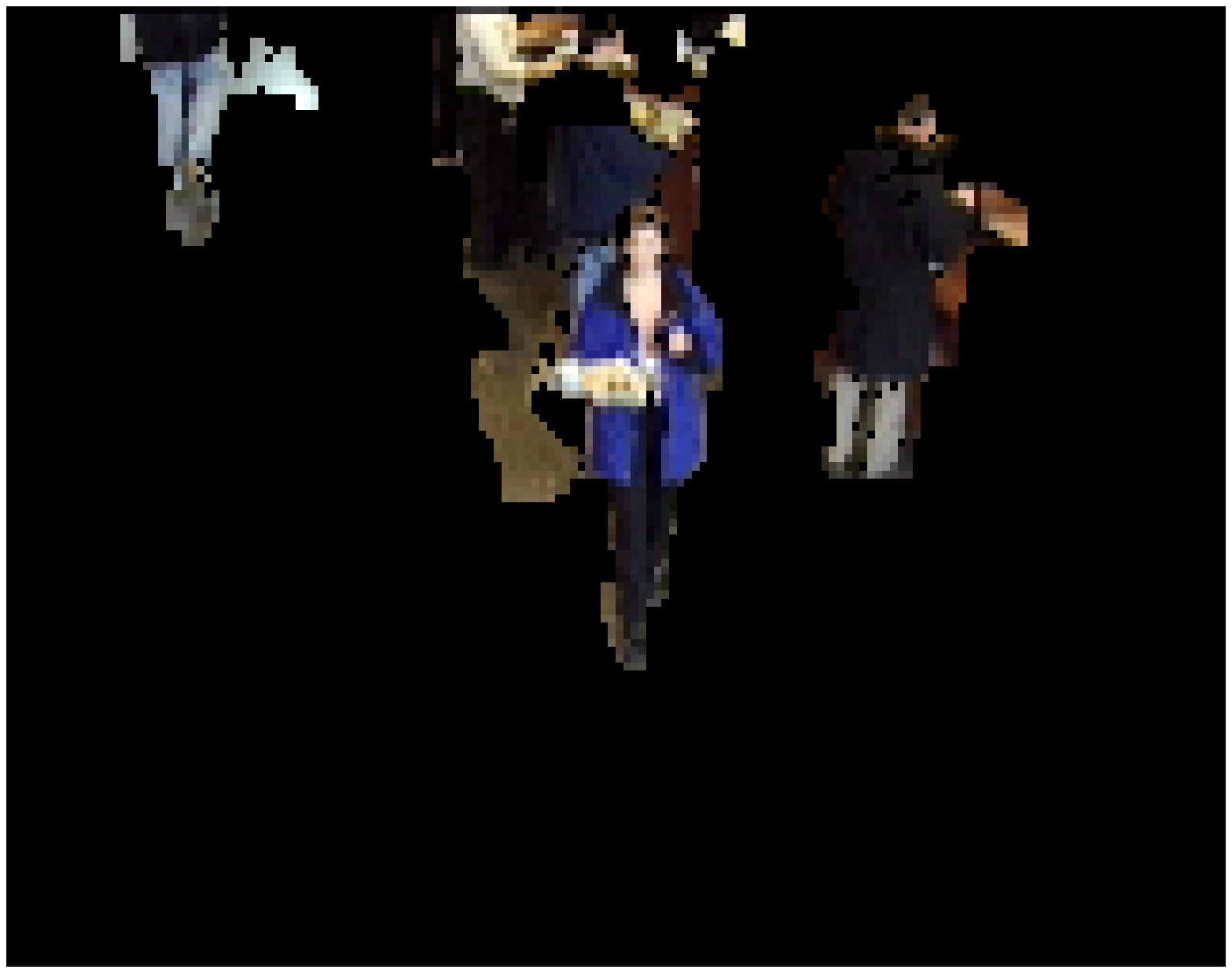} &\EcartTabFig
  \includegraphics[width=0.19\textwidth]{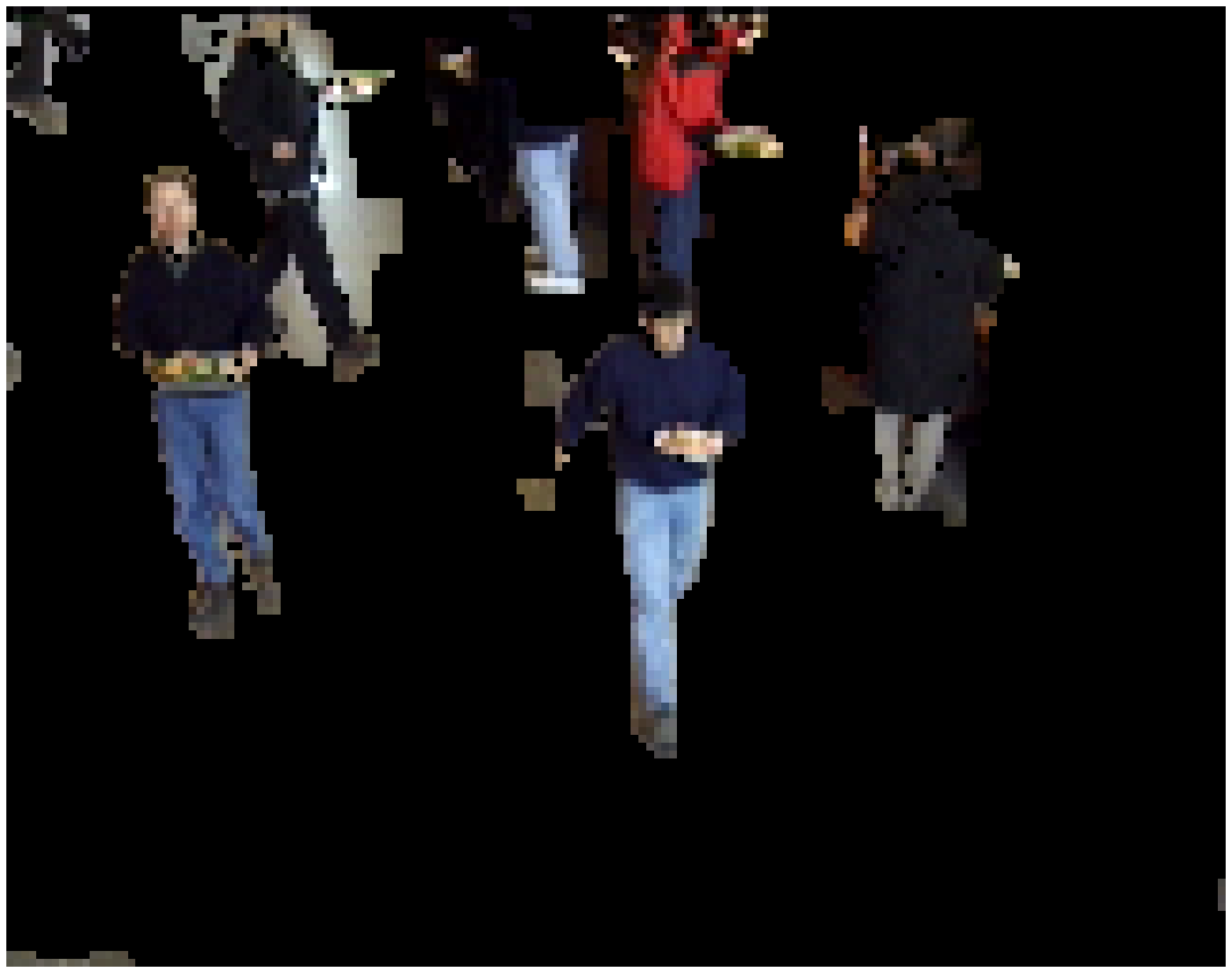}
 \end{tabular}
 \end{center}
\caption{The original image $\y$ (column 1), the background (i.e., $\X\w$) reconstructed by our method (column 2), 
and the foreground (i.e., the sparsity pattern of $\e$ as a mask on the original image) detected with $\ell_1$ (column 3) and with $\ell_1+\Omega$ (column 4).
The rightmost column is another foreground found with $\Omega$, on a different image,
with the same values of $\lambda_1,\lambda_2$ as for the previous image.
For the top left image, 
the percentage of pixels matching the ground truth is 98.8\% with $\Omega$, 87.0\% without.
As for the bottom left image, the result is 93.8\% with $\Omega$, 90.4\% without (best seen in color).}\label{fig:background_sub}
\end{figure}
\subsection{Multi-Task Learning of Hierarchical Structures}
In \cite{jenatton3}, Jenatton et al.~have recently proposed to use a
hierarchical structured norm to learn dictionaries of natural image patches.
Following their work, 
we seek to represent $n$ signals $\{\y^1,\dots,\y^n\}$ of dimension $m$ as sparse linear combinations of elements
from a dictionary $\X=[\x^1,\ldots,\x^p]$ in $\Real^{m \times p}$.
This can be expressed for all $i$ in $\IntSet{n}$ as $\y^i \approx \X \w^i$, for some sparse vector $\w^i$ in $\R{p}$.
In \cite{jenatton3}, the dictionary elements are embedded in a
\textit{predefined} tree $\mathcal{T}$, 
via a particular instance of the
structured norm $\Omega$, which we refer to it as~$\Omega_{\text{tree}}$,
 and call $\GG$ the underlying set of groups.
In this case, each signal~$\y^i$ admits a sparse decomposition in the
form of a subtree of dictionary elements. 

Inspired by ideas from multi-task learning \cite{obozinski}, we propose to
learn the tree structure $\mathcal{T}$ by pruning irrelevant parts of a larger
initial tree $\mathcal{T}_0$.  We achieve this by using an additional
regularization term~$\Omega_{\text{joint}}$ across the different decompositions,
so that subtrees of $\mathcal{T}_0$ will \textit{simultaneously} be removed for all signals $\y^i$.
In other words, the approach of \cite{jenatton3} is extended by the following formulation:
\begin{equation}
    \min_{\X,\W}
    \frac{1}{n}\sum_{i=1}^n\!\Big[\frac{1}{2} \|\y^i-\X\w^i\|_2^2 + \lambda_1 \Omega_{\text{tree}}(\w^i)\Big]\!+\!\lambda_2\Omega_{\text{joint}}(\W),\ \text{s.t.}\
    \|\x^j\|_2\leq 1, ~\text{for all}~ j ~\text{in}~ \IntSet{p}, \label{eq:dict_learning}
\end{equation}
where $\W \defin [\w^1,\ldots,\w^n]$ is the matrix of decomposition
coefficients in $\Real^{p \times n}$. The new regularization term operates
on the rows of $\W$ and is defined as $\Omega_{\text{joint}}(\W) \defin
\sum_{g\in\G}\max_{i\in\IntSet{n}}|\w_g^i|$.\footnote{The simplified case where 
$\Omega_{\text{tree}}$ and $\Omega_{\text{joint}}$ are the $\ell_1$-
and mixed $\ell_1/\ell_2$-norms~\cite{yuan} corresponds to~\cite{sprechmann}.}
The overall penalty on $\W$, which results from the combination of $\Omega_{\text{tree}}$ and
$\Omega_{\text{joint}}$, is itself an instance of~$\Omega$ with general overlapping groups, as defined in Eq~(\ref{eq:def_omega}).

To address problem~(\ref{eq:dict_learning}), we use the same optimization
scheme as \cite{jenatton3}, i.e., alternating between~$\X$ and~$\W$, fixing one
variable while optimizing with respect to the other. 
The task we consider is the denoising of natural image patches, with the same dataset and protocol as~\cite{jenatton3}.
We study whether learning the hierarchy of the dictionary elements improves the denoising performance, 
compared to standard sparse coding (i.e., when $\Omega_{\text{tree}}$ is the $\ell_1$-norm and $\lambda_2=0$) 
and the hierarchical dictionary learning of \cite{jenatton3} based on predefined trees (i.e., $\lambda_2=0$).
The dimensions of the training set --- $50\,000$ patches of size $8\!\times\!8$ for dictionaries with up to $p=400$ elements --- 
impose to handle extremely large graphs, with $|E|\approx|V|\approx 4.10^7$. 
Since problem~(\ref{eq:dict_learning}) is too large to be solved exactly sufficiently many times to select the regularization parameters $(\lambda_1,\lambda_2)$ rigorously,
we use the following heuristics:
we optimize mostly with the currently pruned tree held fixed (i.e., $\lambda_2=0$), 
and only prune the tree (i.e., $\lambda_2>0$) every few steps on a random subset of $10\,000$ patches.
We consider the same hierarchies as in~\cite{jenatton3}, involving between $30$ and $400$ dictionary elements.
The regularization parameter $\lambda_1$ is selected on the validation set of $25\,000$ patches, 
for both sparse coding (Flat) and hierarchical dictionary learning (Tree). 
Starting from the tree giving the best performance 
(in this case the largest one, see Figure~\ref{fig:tree}), 
we solve problem~(\ref{eq:dict_learning}) following our heuristics, for increasing values of $\lambda_2$.
As shown in Figure~\ref{fig:tree}, there is a regime where our approach performs significantly better than the two other compared methods.
The standard deviation of the noise is $0.2$ (the pixels have values in $[0,1]$); no significant improvements were observed for lower levels of noise.
\begin{figure}[hbtp]
   \centering
   \includegraphics[width=0.39\linewidth]{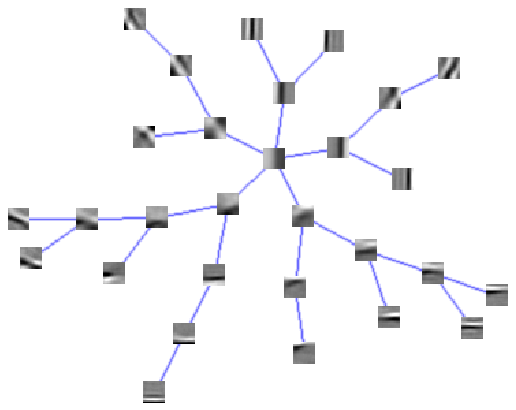}\hfill
   \includegraphics[width=0.5\linewidth]{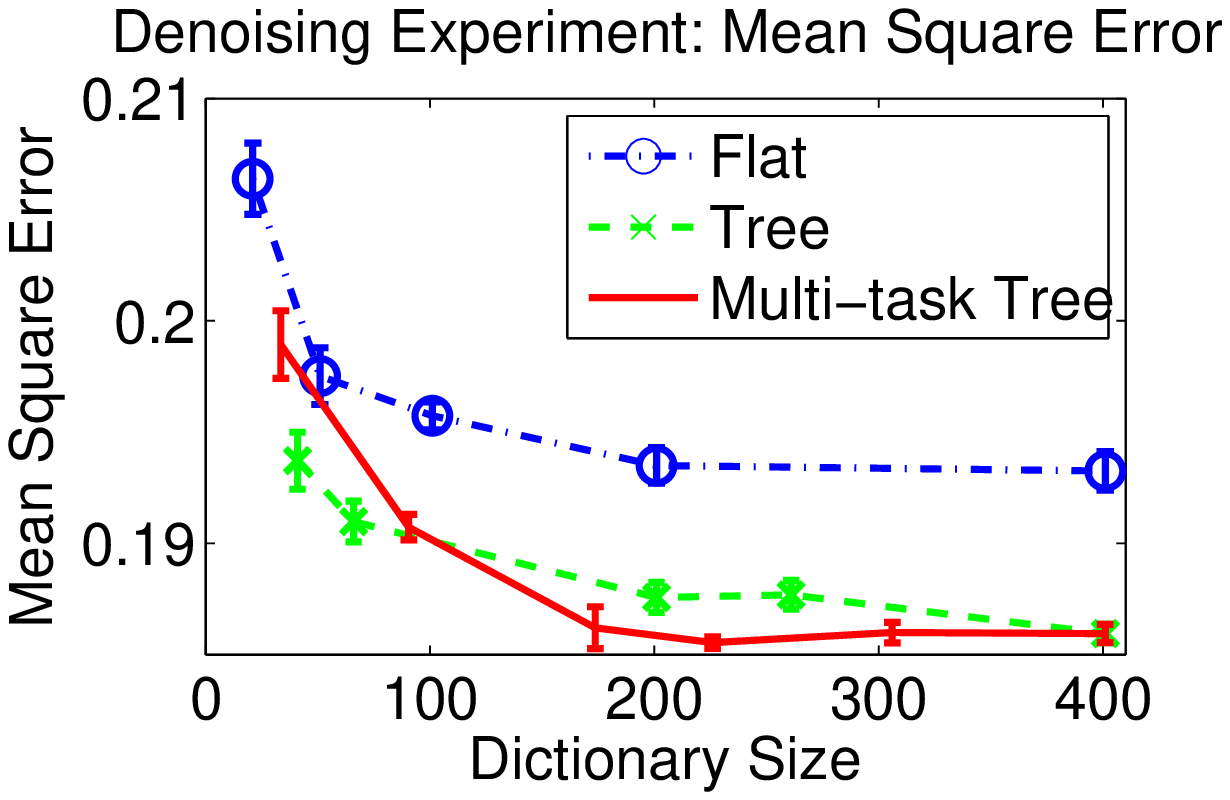} 
   \caption{Left: Hierarchy obtained by pruning a larger tree of $76$ elements. 
   Right: Mean square error versus dictionary size. 
   The error bars represent two standard deviations, based on three runs.} \label{fig:tree}
\end{figure}

 \section{Conclusion}
 We have presented a new optimization framework for solving sparse structured
problems involving sums of $\ell_\infty$-norms of any (overlapping) groups of
variables.  Interestingly, this sheds new light on connections between sparse
methods and the literature of network flow optimization.  In particular, the
proximal operator for the formulation we consider can be cast as a quadratic
min-cost flow problem, for which we propose an efficient and simple algorithm.
This allows the use of accelerated gradient methods.  Several experiments
demonstrate that our algorithm can be applied to a wide class of learning problems,
which have not been addressed before within sparse methods.

 \newpage
 \appendix
 \section{Equivalence to Canonical Graphs}\label{appendix:equivalent}
Formally, the notion of equivalence between graphs can be summarized by
the following lemma:
\begin{lemma}[Equivalence to canonical graphs.] \label{lemma:equivalent}~\\
Let $G=(V,E,s,t)$ be the canonical graph corresponding to a group structure $\GG$ with weights $(\eta_g)_{g \in \GG}$.
Let $G'=(V,E',s,t)$ be a graph sharing the same set of
vertices, source and sink as $G$, but with a different arc set $E'$.
We say that $G'$ is equivalent to $G$ if and only if the following conditions hold:
\begin{itemize}
\item Arcs of $E'$ outgoing from the source are the same as in $E$, with the same costs and capacities.
\item Arcs of $E'$ going to the sink are the same as in $E$, with the same costs and capacities. 
\item For every arc $(g,j)$ in $E$, with $(g,j)$ in $V_{gr} \times V_u$, there exists
a unique path in $E'$ from $g$ to $j$ with zero costs and infinite capacities on every
arc of the path.
\item Conversely, if there exists a path in $E'$ between a vertex $g$ in $V_{gr}$
and a vertex $j$ in $V_u$, then there exists an arc $(g,j)$ in $E$.
\end{itemize}
Then, the cost of the optimal min-cost flow on $G$ and $G'$ are the same.
Moreover, the values of the optimal flow on the arcs $(j,t)$, $j$ in $V_u$, are
the same on $G$ and $G'$.
\end{lemma}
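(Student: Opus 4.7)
The plan is to reduce the claim to showing that $G$ and $G'$ admit the same set of achievable sink-arc flow vectors $(\xibbar_j)_{j \in V_u}$. Since all arcs except those entering the sink have zero cost, the objective of a min-cost flow depends only on $(\xibbar_j)_{j \in V_u}$ through $\sum_{j \in V_u} \frac{1}{2}(\u_j - \xibbar_j)^2$. Once the equality of the feasible sets (projected onto the sink-arc coordinates) is established, both graphs yield the same infimum, attained at the same $(\xibbar_j)$, which is precisely what the lemma asserts.

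First I would establish the forward inclusion: any feasible flow on $G$ can be ``lifted'' to $G'$. Given a flow on $G$ with internal values $\xib^g_j$ on each arc $(g,j) \in E$, route the quantity $\xib^g_j$ along the \emph{unique} zero-cost, infinite-capacity path in $E'$ from $g$ to $j$ (guaranteed by the third condition). The induced flow on $G'$ is obtained by summing these path flows along each internal arc of $E'$. Capacity constraints on internal arcs are trivially satisfied (infinite capacity), flow conservation is preserved at every intermediate node since each path contributes the same amount in and out, and the source-to-group and index-to-sink arc flows are preserved by construction.

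For the reverse inclusion, I would invoke the standard flow-decomposition theorem to write any feasible flow on $G'$ as a nonnegative combination of simple $s$-to-$t$ path flows (the graph has no cycles along which cost-decreasing re-routing is needed because all internal costs vanish and the source-sink arc flows are preserved under acyclic decomposition). Each such path starts at $s$, enters a group vertex $g_1 \in V_{gr}$, traverses a sequence of internal vertices, exits at some index $j \in V_u$, and terminates at $t$. The fourth condition ensures the existence of the arc $(g_1,j) \in E$ (equivalently $j \in g_1$), so a path of the same flow magnitude can be routed in $G$ along $s \to g_1 \to j \to t$. Aggregating these path flows over the decomposition yields a feasible flow on $G$ with identical values on $(s,g)$ and $(j,t)$ arcs.

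The main obstacle is the reverse direction, specifically justifying the flow decomposition when an internal path in $G'$ may traverse several group vertices (as in Figure~\ref{subfig:graphd}). The subtlety is that a path flow collapses the contribution of all intermediate groups onto the single starting group $g_1$; one must verify that this collapsing yields a bona fide feasible flow on $G$, i.e.\ that the total re-routed mass emanating from $g_1$ in $G$ still respects the capacity $\lambda\eta_{g_1}$ on $(s,g_1)$. This is immediate because the $(s,g_1)$-arc flow is invariant under the lift/projection (it equals the total mass of all path flows starting with $(s,g_1)$ in either graph), so the source-arc capacity constraint in $G'$ transfers verbatim to $G$. Combining the two inclusions concludes the proof.
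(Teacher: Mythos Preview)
Your proposal is correct and follows essentially the same route as the paper: both arguments observe that the cost depends only on the sink-arc flows $(\xibbar_j)_{j\in V_u}$, lift a flow from $G$ to $G'$ by routing each internal arc along its unique corresponding path, and for the converse invoke a path-flow decomposition of a feasible flow on $G'$ (the paper cites Proposition~1.1 of~\cite{bertsekas2}) together with the fourth condition to map each $s$--$t$ path back to an arc in $E$. Your discussion of the capacity constraint on $(s,g_1)$ in the reverse direction is a point the paper leaves implicit, but otherwise the two proofs coincide.
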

\begin{proof}
We first notice that on both $G$ and $G'$, the cost of a flow on the graph only
depends on the flow on the arcs $(j,t)$, $j$ in $V_u$, which we have denoted by
$\xibbar$ in $E$.

We will prove that finding a feasible flow $\pi$ on $G$ with a cost $c(\pi)$ is
equivalent to finding a feasible flow $\pi'$ on $G'$ with the same cost
$c(\pi)=c(\pi')$.  We now use the concept of \emph{path flow}, which is a flow
vector in $G$ carrying the same positive value on every arc of a directed path
between two nodes of $G$.  It intuitively corresponds to sending a positive
amount of flow along a path of the graph.

According to the definition of graph equivalence introduced in the Lemma, it is
easy to show that there is a bijection between the arcs in $E$, and the paths
in $E'$ with positive capacities on every arc.  Given now a feasible flow $\pi$
in $G$, we build a feasible flow $\pi'$ on $G'$ which is a \textit{sum} of path
flows.  More precisely, for every arc $a$ in $E$, we consider its equivalent
path in $E'$, with a path flow carrying the same amount of flow as $a$.
Therefore, each arc $a'$ in $E'$ has a total amount of flow that is equal to
the sum of the flows carried by the path flows going over $a'$.  It is also
easy to show that this construction builds a flow on $G'$ (capacity and
conservation constraints are satisfied) and that this flow $\pi'$ has the same
cost as $\pi$, that is, $c(\pi)=c(\pi')$.

Conversely, given a flow $\pi'$ on $G'$, we use a classical path flow
decomposition~(see Proposition 1.1 in~\cite{bertsekas2}), saying that there
exists a decomposition of $\pi'$ as a sum of path flows in $E'$. Using the
bijection described above, we know that each path in the previous sums
corresponds to a unique arc in $E$. We now build a flow $\pi$ in $G$, by
associating to each path flow in the decomposition of $\pi'$, an arc in $E$
carrying the same amount of flow.  The flow of every other arc in $E$ is set to
zero.  It is also easy to show that this builds a valid flow in $G$ that has
the same cost as $\pi'$.
\end{proof}

\section{Convergence Analysis}\label{appendix:convergence}
We show in this section the correctness of Algorithm~\ref{algo:prox} for computing
the proximal operator, and of Algorithm~\ref{algo:dual_norm} for computing the dual
norm $\Omega^\star$.
\subsection{Computation of the Proximal Operator}
We now prove that our algorithm converges and that it finds the optimal solution of
the proximal problem. This requires that we introduce the optimality conditions for problem~(\ref{eq:dual_problem}) derived in~\cite{jenatton3}, since our convergence proof essentially checks that these conditions are satisfied upon termination of the algorithm.

\begin{lemma}[Optimality conditions of the problem~(\ref{eq:dual_problem}), \cite{jenatton3}]\label{lemma:opt}
The primal-dual variables $(\w,\xib)$ are respectively solutions of the primal~(\ref{eq:prox_problem})
and dual problems~(\ref{eq:dual_problem}) if and only if 
the dual variable $\xib$ is feasible for the problem~(\ref{eq:dual_problem}) and
\begin{displaymath}
\begin{split}
    & {\textstyle \w = \u -\sum_{g \in \G} \xib^g},  \\
  &  \forall g \in \G,~~ \left\{ \begin{array}{l}
       \w_g^\top \xib_g^g = \|\w_g\|_\infty \|\xib^g\|_1  ~~\text{and}~~ \|\xib^g\|_1=\lambda \eta_g,  \\
       \text{or}~~ \w_g = 0. 
    \end{array} \right. 
 \end{split}
 \end{displaymath}
\end{lemma}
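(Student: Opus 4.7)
The plan is to derive the stated conditions as the KKT / saddle-point conditions of a convex-concave reformulation of the primal problem~(\ref{eq:prox_problem}).

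First, I would use the dual representation of the $\ell_\infty$-norm: for every $g \in \GG$,
\[
\lambda\eta_g\|\w_g\|_\infty \,=\, \max\bigl\{(\xib^g)^\top \w \,:\, \|\xib^g\|_1 \leq \lambda\eta_g,\ \xib^g_j = 0 \text{ for } j \notin g\bigr\},
\]
so that $\lambda\Omega(\w) = \max_{\xib \in \Xi}\sum_{g\in\GG}(\xib^g)^\top\w$, where $\Xi$ denotes the feasible set of~(\ref{eq:dual_problem}). Substituting into~(\ref{eq:prox_problem}) yields the saddle-point problem
\[
\min_{\w \in \R{p}}\ \max_{\xib \in \Xi}\ \Phi(\w,\xib), \qquad \Phi(\w,\xib) \defin \tfrac{1}{2}\|\u-\w\|_2^2 + \sum_{g\in\GG}(\xib^g)^\top\w.
\]
Since $\Phi$ is strongly convex in $\w$, linear (hence concave) in $\xib$, and $\Xi$ is a nonempty, convex, compact set, Sion's minimax theorem (equivalently, standard Fenchel duality) applies: the order of $\min$ and $\max$ can be swapped, strong duality holds, and $\w$, $\xib$ are primal-dual optima if and only if $(\w,\xib)$ is a saddle point of $\Phi$ with $\xib \in \Xi$. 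Solving the inner minimization over $\w$ for fixed $\xib$ gives the stationarity condition $\w = \u - \sum_{g\in\GG}\xib^g$, the first equation of the lemma, and reduces $\Phi$ to the objective of~(\ref{eq:dual_problem}) up to an additive constant.

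The second block of conditions then comes from the per-group optimality of $\xib^\star$ in the outer maximization. For each $g$, $\xib^{\star g}$ must maximize $(\xib^g)^\top\w^\star$ over the per-group feasible set. By Hölder's inequality,
\[
(\xib^g)^\top\w^\star = (\xib^g_g)^\top\w^\star_g \,\leq\, \|\xib^g\|_1\,\|\w^\star_g\|_\infty \,\leq\, \lambda\eta_g\|\w^\star_g\|_\infty,
\]
with an attainable maximum of $\lambda\eta_g\|\w^\star_g\|_\infty$. When $\w^\star_g \neq 0$ both inequalities must be tight, which translates into the two equalities $(\w^\star_g)^\top\xib^{\star g}_g = \|\w^\star_g\|_\infty\|\xib^{\star g}\|_1$ and $\|\xib^{\star g}\|_1 = \lambda\eta_g$; when $\w^\star_g = 0$ the upper bound is zero and any feasible $\xib^{\star g}$ attains it, which is exactly the ``or $\w_g=0$'' branch. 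The converse direction is obtained by reversing the argument: given a feasible $\xib$ and a $\w$ satisfying the two bullets, $\w$ is the unique unconstrained minimizer of $\Phi(\cdot,\xib)$, while the case analysis above shows that $\xib$ maximizes $\Phi(\w,\cdot)$ over $\Xi$, certifying $(\w,\xib)$ as a saddle point of $\Phi$ and hence a primal-dual optimum. The main subtlety is the $\w_g=0$ case, which decouples Hölder saturation from constraint saturation but is already absorbed by the stated disjunction; apart from this, the argument is a direct application of convex duality plus the Hölder equality case.
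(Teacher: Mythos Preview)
Your proof is correct. The saddle-point reformulation via the dual representation of the $\ell_\infty$-norm, followed by Sion's minimax theorem and the per-group H\"older equality case, is a clean and standard way to derive these conditions; both directions are properly justified, including the $\w_g=0$ branch.

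Note, however, that the paper does not actually supply its own proof of this lemma: it is stated with a citation to~\cite{jenatton3} and used as a black box in the subsequent correctness analysis of Algorithm~\ref{algo:prox}. So there is no in-paper argument to compare against. Your derivation is presumably close in spirit to what one would find in~\cite{jenatton3}, since the result is exactly the characterization of when the subdifferential inclusion $0\in\partial\big(\tfrac{1}{2}\|\u-\cdot\|_2^2+\lambda\Omega(\cdot)\big)(\w)$ holds, expressed through the Fenchel--Young equality for each group term.
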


Note that these optimality conditions provide an intuitive view of our
min-cost flow problem. Solving the min-cost flow problem is equivalent to
sending the maximum amount of flow in the graph under the capacity constraints,
while respecting the rule that \emph{the flow outgoing from a group $g$ should
always be directed to the variables $\u_j$ with maximum residual $\u_j-\sum_{g
\in \G}\xib^g_j$}.

Before proving the convergence and correctness of our algorithm, we also recall
classical properties of the min capacity cuts, which we
intensively use in the proofs of this paper.  The procedure
\texttt{computeFlow} of our algorithm finds a minimum $(s,t)$-cut of a graph
$G=(V,E,s,t)$, dividing the set $V$ into two disjoint parts $V^+$ and $V^-$.
$V^+$ is by construction the sets of nodes in $V$ such that there exists a
non-saturating path from $s$ to $V$, while all the paths from $s$ to $V^-$ are
saturated. Conversely, arcs from $V^+$ to $t$ are all saturated, whereas there
can be non-saturated arcs from $V^-$ to $t$. Moreover, the following properties
hold
\begin{itemize}
\item There is no arc going from $V^+$ to $V^-$. Otherwise the value of the cut would
be infinite. (Arcs inside $V$ have infinite capacity by construction of our graph). 
\item There is no flow going from $V^-$ to $V^+$ (see properties of the minimum $(s,t)$-cut~\cite{bertsekas2}).
\item The cut goes through all arcs going from $V^+$ to $t$, and all arcs going from $s$ to $V^-$.
\end{itemize}
All these properties are illustrated on Figure~\ref{fig:graph2}.
\begin{figure}[hbtp]
\tikzstyle{source}=[circle,thick,draw=blue!75,fill=blue!20,minimum size=8mm]
\tikzstyle{sink}=[circle,thick,draw=blue!75,fill=blue!20,minimum size=8mm]
\tikzstyle{group}=[place,thick,draw=red!75,fill=red!20, minimum size=8mm]
\tikzstyle{var}=[rectangle,thick,draw=black!75,fill=black!20,minimum size=6mm]
\tikzstyle{part}=[minimum size=6mm]
\tikzstyle{every label}=[red]
   \begin{center}
      \begin{tikzpicture}[node distance=1.7cm,>=stealth',bend angle=45,auto]
         \begin{scope}
            \node [source]   (s)                                    {$s$};
            \node [group]    (g)  [below of=s,xshift=-12mm]                      {$g$}
            edge  [pre] node[left] {$\xib^g_1 \!+ \!\xib^g_2 \! < \!\lambda \eta_g$} (s);
            \node [part] (p) [left of=g] {{\huge $V^+$}};
            \node [group]    (h)  [below of=s,xshift=12mm]                      {$h$}
            edge  [pre,very thick] node[right] {$\xib^h_2 \! = \!\lambda \eta_h$} (s);
            \node [part] (p2) [right of=h] {{\huge $V^-$}};
            \node [var] (u2) [below of=g,xshift=12mm]                    {$\u_2$}
            edge  [pre,dotted] node[above, right] {$0$} (h)
            edge  [pre] node[above, left] {$\xib^{g}_2$} (g);
            \node [var] (u1)  [left of=u2, node distance=2.5cm] {$\u_1$}
            edge  [pre] node[above, left] {$\xib^{g}_1$} (g);
            \node [var] (u3) [right of=u2, node distance=2.5cm] {$\u_3$}
            edge  [pre] node[above, right] {$\xib^{h}_3$} (h);
            \node [sink] (si) [below of=u2,node distance=2cm] {$t$}
            edge [pre,very thick] node[above,left,xshift=1mm] {$\xibbar_1 \!\! =\!\! \gammab_1$} (u1)
            edge [pre,very thick] node[above,left,xshift=1mm] {$\xibbar_2\!\! = \!\! \gammab_2$} (u2)
            edge [pre] node[above,right] {$\xibbar_3 \! < \! \gammab_3$} (u3);
            \draw [very thick] (1,0) .. controls (0,-0.75) .. (0,-1.5) .. controls (0,-2.5) .. (0.75,-3) .. controls (1.5,-3.5) and (1.5,-4) .. (0.5,-4) -- (-3,-4);
         \end{scope}
      \end{tikzpicture}
   \end{center}
   \caption{Cut computed by our algorithm. $V^+\! = \! V_u^+ \cup V_{gr}^+$, with $V_{gr}^+\! =\! \{g\}$, $V_{u}^+\! =\! \{ 1,2 \}$,
and $V^- \! =\! V_u^- \cup V_{gr}^-$, with $V_{gr}^-\! =\! \{h\}$, $V_{u}^- \! =\! \{ 3 \}$. Arcs going from $s$ to $V^-$ are saturated, as well as arcs going from $V^+$ to $t$. Saturated arcs are in bold. Arcs with zero flow are dotted.} \label{fig:graph2}
\end{figure}

Recall that we assume (cf. Section~\ref{subsec:graph})
that the scalars $\u_j$ are all non negative, and that we add non-negativity
constraints on~$\xib$.  With the optimality conditions of Lemma~\ref{lemma:opt}
in hand, we can show our first convergence result.
\begin{proposition}[Convergence of Algorithm \ref{algo:prox}] \label{prop:convergence}~\\
Algorithm~\ref{algo:prox} converges in a finite and polynomial number of operations.
\end{proposition}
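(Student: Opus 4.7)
The plan is to prove termination and polynomial complexity by (i) bounding the cost of a single call to \texttt{computeFlow} and (ii) bounding the size of the recursion tree. First I would check that a single invocation of \texttt{computeFlow}$(V,E)$, ignoring its recursive calls, runs in polynomial time. The projection step in line~1 is a projection onto a scaled simplex (or onto its intersection with a box after the improved step described in Section~\ref{subsec:prox}), which is known to be solvable in $O(|V_u|)$ operations \cite{brucker,duchi}. The max-flow step in line~3 is performed by a standard polynomial max-flow algorithm (e.g., push--relabel \cite{goldberg}), and extraction of the min $(s,t)$-cut after its termination costs $O(|V|+|E|)$. Therefore a single call has cost polynomial in $|V|$ and $|E|$.

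Next I would bound the size of the recursion tree. The key claim is: \emph{whenever the test in line~4 triggers a recursive call, both $V^+$ and $V^-$ are non-empty, and thus $|V^+|,|V^-|<|V|$.} The argument has two halves. On the one hand, if $\bar\xib \neq \gammab$, then since the max-flow value equals $\sum_{j\in V_u}\bar\xib_j$ and is upper-bounded by $\sum_{j\in V_u}\gammab_j$, there exists some $j\in V_u$ with $\bar\xib_j<\gammab_j$; the arc $(j,t)$ is then unsaturated, hence not in the min-cut, so $j\in V^-$. On the other hand, by max-flow/min-cut duality the value of the min-cut equals the max-flow value, which is strictly less than $\sum_{j\in V_u}\gammab_j\leq\lambda\sum_{g\in V_{gr}}\eta_g$ (the total capacity out of the source); therefore the cut cannot consist of all the source arcs, so at least one group vertex $g\in V_{gr}$ must lie in $V^+$. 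Combining these two observations yields $|V^+|,|V^-|\geq 1$.

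From this partition property I would conclude by a standard recursion-tree argument. Since $V^+$ and $V^-$ are disjoint and both strict subsets of $V$ with union $V$, the leaves of the recursion tree correspond to a partition of the original vertex set $V_0$; hence the tree has at most $|V_0|$ leaves and therefore at most $2|V_0|-1$ internal nodes. Thus \texttt{computeFlow} is invoked $O(|V_0|)$ times, and since each invocation is polynomial in $|V_0|$ and $|E_0|$, the whole procedure terminates in a finite, polynomial number of operations.

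The main obstacle I expect is the non-emptiness argument for $V^+$ and $V^-$: it relies on the precise way the min-cut is induced by the termination of the max-flow, on the fact that arcs internal to $V$ have infinite capacity (so no such arc can lie in the cut), and on the slack inequality between the projection value $\sum_j\gammab_j$ and the max-flow value. Once this is established cleanly, the remaining accounting (polynomial per-call cost, linear depth of the recursion in the worst case) is routine.
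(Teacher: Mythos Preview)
Your proposal is correct and follows essentially the same approach as the paper: the heart of the argument is that whenever the recursion is triggered both $V^+$ and $V^-$ are non-empty, which you establish via the same max-flow/min-cut comparison with $\sum_j\gammab_j$ and $\lambda\sum_g\eta_g$ as the paper (you phrase it directly, the paper phrases each half as a proof by contradiction, but the content is identical). Your explicit recursion-tree counting ($O(|V_0|)$ calls) is a welcome detail that the paper leaves implicit.
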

\begin{proof}
Our algorithm splits recursively the graph into disjoints parts and processes
each part recursively.  The processing of one part requires an orthogonal
projection onto an $\ell_1$-ball and a max-flow algorithm, which can both be
computed in polynomial time.  To prove that the procedure converges, it is
sufficient to show that when the procedure \texttt{computeFlow} is called
for a graph $(V,E,s,t)$ and computes a cut $(V^+, V^-)$, then the components
$V^+$ and $V^-$ are both non-empty.

Suppose for instance that $V^- \!\!= \emptyset$.  In this case, the capacity of
the min-cut is equal to $\sum_{j \in V_u} \gammab_j$, and the value of the
max-flow is $\sum_{j \in V_u} \xibbar_j$. Using the classical max-flow/min-cut
theorem~\cite{ford}, we have equality between these two terms.  Since, by
definition of both $\gammab$ and $\xibbar$, we have for all~$j$ in~$V_u$,
$\xibbar_j \leq \gammab_j$, we obtain a contradiction with the existence of $j$
in $V_u$ such that $\xibbar_j \neq \gammab_j$. 

Conversely, suppose now that $V^+ \!\!= \emptyset$.
Then, the value of the max-flow is still $\sum_{j \in V_u} \xibbar_j$, and the value
of the min-cut is $\lambda\sum_{g \in V_{gr}} \eta_g$. Using again the
max-flow/min-cut theorem, we have that $\sum_{j \in V_u} \xibbar_j = \lambda\sum_{g
\in V_{gr}} \eta_g$.  Moreover, by definition of $\gammab$, we also have
$\sum_{j \in V_u} \xibbar_j \leq \sum_{j \in V_u} \gammab_j \leq \lambda\sum_{g
\in V_{gr}} \eta_g$, leading to a contradiction with the existence of $j$ in
$V_u$ such that $\xibbar_j \neq \gammab_j$. This proof holds for any graph that
is equivalent to the canonical one.
\end{proof}
After proving the convergence, we prove that the algorithm is correct with the next proposition.

\begin{proposition}[Correctness of Algorithm \ref{algo:prox}] ~\\
 Algorithm~\ref{algo:prox} solves the proximal problem of Eq.~(\ref{eq:prox_problem}).
\end{proposition}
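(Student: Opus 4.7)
The plan is to prove correctness by strong induction on $|V|$ for the recursive procedure \texttt{computeFlow}$(V,E)$, showing that its output $(\bar{\xi}_j)_{j\in V_u}$ minimizes the quadratic cost on the (sub)graph $(V,E,s,t)$. Once this is established, the proposition follows: the initial graph $G_0$ is constructed so that its min-cost flow problem is equivalent to the dual~(\ref{eq:dual_problem}), and by Lemma~\ref{lem:dual} the primal solution of~(\ref{eq:prox_problem}) is recovered as $w = u - \bar{\xi}$. Termination of the recursion is already guaranteed by Proposition~\ref{prop:convergence}.

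For the base case, suppose the max-flow step yields $\bar{\xi} = \gamma$ so that no recursion occurs. The projection in line~1 solves a relaxation of the min-cost flow problem: any feasible flow satisfies $\sum_{j \in V_u} \bar{\xi}_j \leq \lambda \sum_{g \in V_{gr}} \eta_g$ (by summing the per-group capacity constraints), and $\gamma$ minimizes the quadratic cost over exactly this aggregated constraint set. Hence $\tfrac{1}{2}\|u - \gamma\|_2^2$ lower bounds the true optimal cost. Since the max-flow delivers a feasible flow on $(V,E)$ with $\bar{\xi} = \gamma$, this lower bound is attained and the flow is optimal.

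For the inductive step, I would exploit the structural properties of the minimum $(s,t)$-cut $(V^+, V^-)$ already recalled in the excerpt: (i) no arcs from $V^+$ to $V^-$ exist in $G$ (otherwise the cut would be infinite), (ii) no flow crosses from $V^-$ to $V^+$ at the max-flow computed in line~3, and (iii) every arc $V^+ \to t$ is saturated at $\gamma_j$ while every arc $s \to V^-$ is saturated at $\lambda \eta_g$. Combining (i) and (ii), the min-cost flow problem on $(V,E)$ decouples into two independent quadratic min-cost flow problems on $(V^+, E^+)$ and $(V^-, E^-)$, each with strictly fewer vertices and of exactly the same form (group capacities $\lambda \eta_g$, sink-arc costs $c_j$) to which the inductive hypothesis applies. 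Concatenating the two returned subflows then gives an optimal flow on the whole graph.

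The main obstacle is making this decoupling step fully rigorous --- specifically, proving that any optimal flow on $(V,E)$ can be taken to place zero flow on every arc crossing from $V^-$ to $V^+$, so that the two sides genuinely become independent optimization problems. A clean route is an exchange argument: given a candidate optimal flow with positive mass on some cut-crossing arc $(g,j)$, $g \in V_{gr}^-$, $j \in V_u^+$, use the saturation of $V^+ \to t$ arcs from property (iii) together with the zero cost of arcs internal to $V$ to reroute the mass to some $j' \in V_u^-$ without increasing the objective, reducing to a flow that does respect the split. Alternatively, one can verify termination directly against the KKT-type conditions of Lemma~\ref{lemma:opt} group by group: for $g \in V_{gr}^-$ the saturation $\|\xi^g\|_1 = \lambda \eta_g$ and the concentration-on-the-max condition are inherited from the recursive call on $(V^-, E^-)$, while for $g \in V_{gr}^+$ the condition $w_g = 0$ should follow by controlling $w_j = u_j - \gamma_j$ on $V_u^+$ through the water-filling structure of the projection in line~1.
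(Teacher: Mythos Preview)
Your high-level plan (strong induction on $|V|$, split into a ``no-recursion'' case and a ``recursion'' case) matches the paper's, and your lower-bound argument for the termination case $\bar\xi=\gamma$ is correct and in fact cleaner than the paper's explicit check of the conditions in Lemma~\ref{lemma:opt} for that case.

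The gap is in the recursive case, and it is precisely the step you flag as the ``main obstacle''---but your two suggested routes do not close it. First, the exchange argument as you sketch it does not go through: property~(iii) (``every arc $V^+\to t$ is saturated at $\gamma_j$'') is a statement about the max-flow computed with \emph{capacities} $\gamma_j$ on the sink arcs; it says nothing about the optimal min-cost flow, where those arcs have infinite capacity. So you cannot invoke saturation to reroute mass at a putative optimal flow. Second, your KKT route has the roles of the two sides reversed. For $g\in V_{gr}^+$ one does \emph{not} have $\w_g=0$ in general; rather, since there are no arcs from $V^+$ to $V^-$, such a group satisfies $g\cap V_u^+=g$, and the optimality conditions from the recursive call on $(V^+,E^+)$ apply verbatim. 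The genuine difficulty lies with groups $g\in V_{gr}^-$: these \emph{can} contain indices $j\in V_u^+$ (arcs from $V^-$ to $V^+$ exist, they merely carry no flow at the max-flow step). The recursion on $(V^-,E^-)$ only certifies the Lemma~\ref{lemma:opt} conditions for the restriction $g'=g\cap V_u^-$; to lift them to the full $g$ you must show $\max_{j\in g\cap V_u^+}|\w_j|\le \max_{l\in g\cap V_u^-}|\w_l|$, so that the ``concentration-on-the-max'' condition for $g'$ is also the one for $g$.

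This comparison is the technical core of the paper's proof and is not addressed in your proposal. The paper establishes it by showing, via a further decomposition of $V^+$ (and symmetrically $V^-$) and a contradiction argument, that after the two recursive calls every $g\in V_{gr}^+$ satisfies $\|\w_g\|_\infty\le\max_{j\in V_u}|\u_j-\gamma_j|$ and every $g\in V_{gr}^-$ satisfies $\|\w_g\|_\infty\ge\max_{j\in V_u}|\u_j-\gamma_j|$. Note also that after recursion the values $\w_j$ on $V_u^+$ are \emph{not} $\u_j-\gamma_j$ (the projection is redone with a different aggregate budget), so the ``water-filling'' intuition you invoke must be applied at a deeper level of the recursion, not at the current one.
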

\begin{proof}
For a group structure $\GG$, we first prove the correctness of our algorithm if the graph used is its associated canonical graph that we denote $G_0=(V_0,E_0,s,t)$.
We proceed by induction on the number of nodes of the graph.
The induction hypothesis $\H(k)$ is the following:~\vspace*{0.2cm}\\
\textit{For all canonical graphs $G=(V = V_u \cup V_{gr},E,s,t)$ associated with a group structure $\GG_V$ with weights $(\eta_g)_{g \in \GG_V}$ such that $|V|\leq k$, \texttt{computeFlow}$(V,E)$
solves the following optimization problem:}
\begin{equation}
\min_{(\xib_j^g)_{j \in V_u,g \in V_{gr}}} \sum_{j \in V_u} \frac{1}{2} (\u_j - \sum_{g \in V_{gr}}
\xib_j^g)^2 \st \forall g \in V_{gr},~ \sum_{j \in V_u} \xib_j^g \leq \lambda \eta_g
~~\text{and}~~ \xib_j^g=0,~\forall j \notin g. \label{eq:dual2}
\end{equation}
Since $\GG_{V_0}=\GG$, it is sufficient to show that $\H(|V_0|)$ to prove the proposition.

We initialize the induction by $\H(2)$, corresponding to the simplest canonical graph, for which $|V_{gr}|=|V_u|=1$).
Simple algebra shows that $\H(2)$ is indeed correct.

We now suppose that $\H(k')$ is true for all $k' < k$ and consider a graph
$G=(V,E,s,t)$, $|V|=k$.  The first step of the algorithm computes the variable
$(\gammab_j)_{j\in V_u}$ by a projection on the $\ell_1$-ball. This is itself
an instance of the dual formulation of Eq.~(\ref{eq:dual_problem}) in a
simple case, with one group containing all variables.  We can therefore use
Lemma~\ref{lemma:opt} to characterize the optimality of
$(\gammab_j)_{j\in V_u}$, which yields 
\begin{equation}
    \!\!\left\{ \begin{array}{l}
       \sum_{j\in V_u} (\u_j-\gammab_j)\gammab_j = \big(\max_{j \in V_u} |\u_j-\gammab_j|\big)
  \sum_{j\in V_u} \gammab_j ~\text{and}~ \sum_{j\in V_u} \gammab_j=\lambda \sum_{g \in V_{gr}} \eta_g,\!\!\!\!
\\
       \text{or}~~ \u_j - \gammab_j = 0,~\forall j \in V_u. \label{eq:opt_gamma}
    \end{array} \right. 
\end{equation}
The algorithm then computes a max-flow, using the scalars $\gammab_j$ as
capacities, and we now have two possible situations:\\
\begin{enumerate}
\item If $\xibbar_j = \gammab_j$ for
all $j$ in $V_u$, the algorithm stops; we write $\w_j = \u_j-\xibbar_j$ for $j$ in $V_u$, and using Eq.~(\ref{eq:opt_gamma}), we obtain
\begin{equation}
    \left\{ \begin{array}{l}
       \sum_{j\in V_u} \w_j\xibbar_j = (\max_{j \in V_u} |\w_j|)
  \sum_{j\in V_u} \xibbar_j ~~\text{and}~~ \sum_{j\in V_u} \xibbar_j=\lambda \sum_{g \in V_{gr}} \eta_g,
\\
       \text{or}~~ \w_j = 0,~\forall j \in V_u.
    \end{array} \right. 
\end{equation}
We can rewrite the condition above as 
$$
\sum_{g\in V_{gr}}\sum_{j\in g}\! \w_j\xib_j^g = \sum_{g\in V_{gr}}\! ( \max_{j \in V_u} |\w_j| )\!\!\! \sum_{j\in V_u}\xib_j^g.
$$
Since all the quantities in the previous sum are positive, this can only hold if for all $g\in V_{gr}$,
$$
\sum_{j\in V_u}\w_j\xib_j^g = ( \max_{j \in V_u} |\w_j| )\! \sum_{j\in V_u}\xib_j^g.
$$
Moreover, by definition of the max flow and the optimality conditions, we have
$$
\forall g\in V_{gr},\ \sum_{j\in V_u}\! \xib^g_j \leq \lambda \eta_g,\ \text{and}\ \sum_{j\in V_u} \xibbar_j=\lambda \sum_{g \in V_{gr}} \eta_g,
$$
which leads to
$$
\forall g\in V_{gr}, \sum_{j\in V_u}\! \xib^g_j  = \lambda \eta_g.
$$
By Lemma~\ref{lemma:opt}, we have shown that the problem~(\ref{eq:dual2}) is solved.

\item Let us now consider the case where there exists $j$ in $V_u$ such that
$\xibbar_j \neq \gammab_j$. The algorithm splits the vertex set $V$
into two parts $V^+$ and $V^-$, which we have proven to be non-empty in the proof
of Proposition~\ref{prop:convergence}. The next step of the algorithm removes all edges between $V^+$ and $V^-$ (see Figure~\ref{fig:graph2}).
Processing $(V^+,E^+)$ and $(V^-,E^-)$ independently, it
updates the value of the flow matrix $\xib^g_j,\ j\in V_u,\ g\in V_{gr}$, 
and the corresponding flow vector~$\xibbar_j,\ j\in V_u$.
As for $V$, we denote by $V^+_u \defin V^+ \cap V_u$, $V^-_u \defin V^- \cap V_u$ and
$V^+_{gr} \defin V^+ \cap V_{gr}$, $V^-_{gr} \defin V^- \cap V_{gr}$.

Then, we notice that $(V^+,E^+,s,t)$ and $(V^-,E^-,s,t)$ are respective canonical graphs for the group
structures $\GG_{V^+} \defin \{ g \cap V_u^+ \mid g \in V_{gr} \}$, and $\GG_{V^-} \defin \{ g \cap V_u^- \mid g \in V_{gr} \}$.

Writing $\w_j = \u_j-\xibbar_j$ for $j$ in $V_u$, and using the induction hypotheses
$\H(|V^+|)$ and $\H(|V^-|)$, we now have the following optimality conditions deriving
from Lemma~\ref{lemma:opt} applied on Eq.~(\ref{eq:dual2}) respectively for the graphs
$(V^+,E^+)$ and $(V^-,E^-)$:
\begin{equation}
 \forall g \in V_{gr}^+,  g' \defin g \cap V_u^+,~
    \left\{ \begin{array}{l}
       \w_{g'}^\top \xib_{g'}^g = \|\w_{g'}\|_\infty 
 \sum_{j\in g'}\! \xib^g_j ~~\text{and}~~ \sum_{j\in g'}\! \xib^g_j=\lambda \eta_g,
\\
       \text{or}~~ \w_{g'} = 0, \\
    \end{array} \right. \label{eq:hplus}
\end{equation}
and
\begin{equation}
\forall g \in V_{gr}^-, g' \defin g \cap V_u^-,  
    \left\{ \begin{array}{l}
       \w_{g'}^\top \xib_{g'}^g = \|\w_{g'}\|_\infty 
\sum_{j\in g'}\! \xib^g_j  ~~\text{and}~~ \sum_{j\in g'}\! \xib^g_j=\lambda \eta_g,
\\
       \text{or}~~ \w_{g'} = 0. \\
    \end{array} \right. 
\label{eq:hmoins}
\end{equation}
We will now combine Eq.~(\ref{eq:hplus}) and Eq.~(\ref{eq:hmoins}) into optimality conditions
for Eq.~(\ref{eq:dual2}).  We first notice that $g \cap V_u^+ = g$ since there
are no arcs between $V^+$ and $V^-$ in $E$ (see the properties of the cuts
discussed before this proposition).
It is therefore possible to replace $g'$ by $g$ in Eq.~(\ref{eq:hplus}). 
We will show that it is possible to do the same in Eq.~(\ref{eq:hmoins}), so that combining these two equations yield the optimality conditions of Eq.~(\ref{eq:dual2}).

More precisely, we will show that for all $g \in V_{gr}^-$ and $j \in
g \cap V_u^+$, $|\w_j| \leq \max_{l \in g \cap V_u^-} |\w_l|$, in which case $g'$ can be replaced by $g$ in Eq.~(\ref{eq:hmoins}).
This result is relatively intuitive: $(s,V^+)$ and $(V^-,t)$ being an $(s,t)$-cut,
all arcs between $s$ and $V^-$ are saturated, while there are unsaturated arcs
between $s$ and $V^+$; one therefore expects the residuals $\u_j -\xibbar_j$
to decrease on the $V^+$ side, while increasing on the $V^-$ side.
The proof is nonetheless a bit technical.

Let us show first that for all $g$ in $V_{gr}^+$, $\NormInf{\w_g} \leq \max_{j\in V_u}|\u_j-\gammab_j|$.
We split the set $V^+$ into disjoint parts:
\begin{displaymath}
\begin{split}
V_{gr}^{++} &\defin \{ g \in V_{gr}^+ \st \NormInf{\w_g} \leq \max_{j\in V_u}|\u_j-\gammab_j| \}, \\
V_{u}^{++} &\defin \{ j \in V_{u}^+ \st \exists g \in V_{gr}^{++},~ j \in g \}, \\
V_{gr}^{+-} &\defin V_{gr}^+ \setminus V_{gr}^{++} = \{ g \in V_{gr}^+ \st \NormInf{\w_g} > \max_{j\in V_u}|\u_j-\gammab_j| \}, \\
V_{u}^{+-} &\defin V_{u}^+ \setminus V_{u}^{++}. \\
\end{split}
\end{displaymath}
As previously, we denote $V^{+-}\! \defin V_u^{+-}\! \cup V_{gr}^{+-}$ and $V^{++} \defin\!\!
V_u^{++} \cup V_{gr}^{++}$.  We want to show that $V_{gr}^{+-}$ is necessarily empty.
We reason by contradiction and assume that $V_{gr}^{+-} \neq \varnothing$.

According to the definition of the different sets above, we observe that no
arcs are going from $V^{++}$ to $V^{+-}$, that is, for all $g$ in
$V_{gr}^{++}$, $g \cap V_{u}^{+-}=\varnothing$.  We observe as well that the
flow from $V_{gr}^{+-}$ to $V_u^{++}$ is the null flow, because optimality
conditions (\ref{eq:hplus}) imply that for a group $g$ only nodes $j \in g$
such that $\w_j=\|\w_g\|_\infty$ receive some flow, which excludes nodes in
$V_u^{++}$ provided $V_{gr}^{+-} \neq \varnothing$;
Combining this fact and the inequality 
$
\sum_{g \in V_{gr}^{+}} \lambda \eta_g
\geq \sum_{j \in V_u^{+}} \gammab_j
$ (which is a direct consequence of the minimum $(s,t)$-cut),
we have as well 
$$
\sum_{g \in V_{gr}^{+-}} \lambda \eta_g \geq \sum_{j \in V_u^{+-}} \gammab_j.
$$

Let $j \in V_u^{+-}$, if $\xibbar_j \neq 0$ then for some $g \in V_{gr}^{+-}$ such that $j$ receives
some flow from $g$, which from the optimality conditions~(\ref{eq:hplus}) implies $\w_j=\|\w_g\|_\infty$;
by definition of $V_{gr}^{+-}$, $\|\w_g\|_\infty > \u_j-\gammab_j$. But since at the optimum, $\w_j=\u_j-\xibbar_j$,
this implies that $\xibbar_j < \gammab_j$, and in turn that $\sum_{j\in V_u^{+-}} \xibbar_j=\lambda \sum_{g \in V_{gr}^{+-}} \eta_g$.
Finally, $$\lambda \sum_{g \in V_{gr}^{+-}} \eta_g=\sum_{j\in V_u^{+-},\, \xibbar_j \neq 0} \xibbar_j < \sum_{j\in V_u^{+-}} \gammab_j$$ and this is a contradiction.

We now have that for all $g$ in $V_{gr}^+$, $\NormInf{\w_g} \leq \max_{j\in V_u}|\u_j-\gammab_j| $.
The proof showing that for all $g$ in $V_{gr}^-$, 
$
\NormInf{\w_g} \geq \max_{j\in V_u}|\u_j-\gammab_j|,
$ 
uses the same kind of decomposition for $V^-$, and follows along similar arguments. We will therefore not detail it.

To recap, we have shown that for all $g \in V_{gr}^-$ and $j \in
g \cap V_u^+$, $|\w_j| \leq \max_{l \in g \cap V_u^-} |\w_l|$.
Since there is no flow from $V^-$ to $V^+$, 
i.e., $\xib_j^g=0$ for $g$ in $V_{gr}^-$ and $j$ in $V_{u}^+$,
we can now replace the definition of $g'$ in Eq.~(\ref{eq:hmoins}) by $g'
\defin g \cap V_u$, the combination of Eq.~(\ref{eq:hplus}) and
Eq.~(\ref{eq:hmoins}) gives us optimality conditions for Eq.~(\ref{eq:dual2}).
\end{enumerate}

The proposition being proved for the canonical graph, we extend it now
for an equivalent graph in the sense of Lemma~\ref{lemma:equivalent}.
First, we observe that the algorithm gives the same values of $\gammab$
for two equivalent graphs. Then, it is easy to see that the value $\xibbar$
given by the max-flow, and the chosen $(s,t)$-cut is the same, which is
enough to conclude that the algorithm performs the same steps for
two equivalent graphs.
\end{proof}

\subsection{Computation of the Dual Norm $\Omega^\star$}
Similarly to the proximal operator, the computation of dual norm $\Omega^*$ can itself shown to solve another network flow problem, based on the following variational formulation, which extends a previous result from \cite{jenatton}:
\begin{lemma}[Dual formulation of the dual-norm $\Omega^\star$.] ~\\ Let $\kappab \in \R{p}$. We have
\begin{equation}
\Omega^*(\kappab) = \!\!\! \min_{\xib\in\RR{p}{|\G|},\tau} \!\!\! \tau \quad \text{s.t.}\quad \sum_{g\in\G}\xib^g=\kappab,\ \text{and}\ \forall g\in\G,\ \|\xib^g\|_1 \leq \tau\eta_g ~~\text{with}~~\
\xib_j^g=0\ \text{if}\ j\notin g. \label{eq:dual_norm}
\end{equation}
\end{lemma}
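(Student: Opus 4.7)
My plan is to prove the two inequalities $\Omega^\star(\kappab) \leq T(\kappab)$ and $T(\kappab) \leq \Omega^\star(\kappab)$, where $T(\kappab)$ denotes the right-hand side of Eq.~(\ref{eq:dual_norm}). The easy direction proceeds directly from the definition of the dual norm: for any feasible pair $(\xib,\tau)$ and any $\z$ with $\Omega(\z)\leq 1$, using that $\xib^g$ is supported in $g$, a group-wise Hölder inequality gives
\begin{equation*}
\kappab^\top \z \;=\; \sum_{g\in\G} (\xib^g)^\top \z_g \;\leq\; \sum_{g\in\G}\|\xib^g\|_1\, \|\z_g\|_\infty \;\leq\; \tau\sum_{g\in\G}\eta_g\,\|\z_g\|_\infty \;=\; \tau\,\Omega(\z)\;\leq\;\tau.
\end{equation*}
Maximizing over $\z$ and minimizing over $\tau$ yields $\Omega^\star(\kappab)\leq T(\kappab)$.

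For the reverse direction I would invoke the standard support-function/gauge correspondence. Since $\|\w_g\|_\infty$ is the support function of the $\ell_1$-ball on coordinates in $g$, one can rewrite
\begin{equation*}
\Omega(\w) \;=\; \sup\Bigl\{ \Bigl(\textstyle\sum_{g\in\G}\xib^g\Bigr)^{\!\top}\!\w \; :\; \|\xib^g\|_1\leq \eta_g,\ \xib^g_j=0 \text{ for } j\notin g\Bigr\},
\end{equation*}
so that $\Omega$ is the support function of the set $\mathcal{C}\defin\{\sum_{g\in\G}\xib^g : \|\xib^g\|_1\leq\eta_g,\ \xib^g_j=0 \text{ if } j\notin g\}$. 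The set $\mathcal{C}$ is the image of a compact, convex, symmetric set under a linear map, hence is itself compact, convex, and symmetric. A routine convex-analysis argument now identifies $\mathcal{C}$ with the unit ball $B_{\Omega^\star}$ of the dual norm: the inclusion $\mathcal{C}\subseteq B_{\Omega^\star}$ follows from the easy direction above applied with $\tau=1$, and the converse $B_{\Omega^\star}\subseteq\mathcal{C}$ from a Hahn--Banach separation argument (any $\kappab\notin\mathcal{C}$ can be strictly separated from $\mathcal{C}$ by some $\w$, witnessing $\Omega^\star(\kappab)>1$).

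With this identification in hand, $\Omega^\star$ is the Minkowski gauge of $\mathcal{C}$, that is, $\Omega^\star(\kappab)=\inf\{\tau\geq 0 : \kappab\in\tau\mathcal{C}\}$. Unfolding the membership $\kappab\in\tau\mathcal{C}$ via the rescaling $\xib^g \leftarrow \tau\xib^g$ produces exactly the constraints of Eq.~(\ref{eq:dual_norm}), giving $T(\kappab)\leq\Omega^\star(\kappab)$; compactness of the feasible set further ensures the infimum is attained, justifying the $\min$ in the statement. The main obstacle is the separation step establishing $B_{\Omega^\star}\subseteq\mathcal{C}$, and the key technical prerequisite is to check that $\mathcal{C}$ is closed and symmetric so that the separating hyperplane can be chosen to certify that $\Omega^\star(\kappab)>1$ whenever $\kappab\notin\mathcal{C}$; both facts are immediate from the explicit description of $\mathcal{C}$.
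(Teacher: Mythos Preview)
Your proof is correct but follows a different route from the paper's. The paper argues via Lagrangian duality: it rewrites the defining maximization $\Omega^*(\kappab)=\max_{\Omega(\z)\leq 1}\kappab^\top\z$ by introducing auxiliary scalars $\alpha_g\geq\|\z_g\|_\infty$ subject to $\sum_g\eta_g\alpha_g\leq 1$, checks Slater's condition for the resulting conic program, forms the Lagrangian with multipliers $(\tau,\gamma_g,\xib^g)$, and eliminates the primal variables $(\z,\alpha_g)$ through stationarity; the surviving dual problem is exactly Eq.~(\ref{eq:dual_norm}).

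Your argument is instead geometric: you recognize $\Omega$ as the support function of the compact convex symmetric set $\mathcal{C}=\sum_{g}\{\xib^g:\|\xib^g\|_1\leq\eta_g,\ \mathrm{supp}(\xib^g)\subseteq g\}$, invoke the bipolar correspondence to identify $\mathcal{C}$ with the unit ball of $\Omega^*$, and read off $\Omega^*$ as the Minkowski gauge of $\mathcal{C}$. This avoids any constraint-qualification check and any Lagrangian algebra; the only nontrivial step is the separation argument for $B_{\Omega^*}\subseteq\mathcal{C}$, which is indeed routine once $\mathcal{C}$ is seen to be closed, convex and symmetric (and once one uses that $\Omega$ is a genuine norm, i.e., every coordinate lies in some group, so the separating $\w$ can be normalized). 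The paper's approach has the advantage of being mechanical and of making the primal--dual variable correspondence explicit, which is useful elsewhere in the paper; yours is shorter and more conceptual.
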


\begin{proof} 
By definition of $\Omega^*(\kappab)$, we have 
$$\Omega^*(\kappab)\defin\max_{\Omega(\z)\leq 1}\z^\top\kappab.$$
By introducing the primal variables $(\alpha_g)_{g\in\GG} \in \R{|\GG|}$, we can rewrite the previous maximization problem as
$$
 \Omega^*(\kappab) = \max_{ \sum_{g\in\GG} \!\eta_g\alpha_g \leq 1 } \kappab^\top \z,\quad \mbox{ s.t. } \quad \forall\ g\in\GG,\ \|\z_g\|_\infty \leq \alpha_g,
$$
with the additional $|\GG|$ conic constraints $ \|\z_g\|_\infty \leq \alpha_g$. 
This primal problem is convex and satisfies Slater's conditions for generalized conic inequalities, which implies that strong duality holds \cite{boyd}.
We now consider the Lagrangian $\mathcal{L}$ defined as
$$
\mathcal{L}(\z,\alpha_g,\tau, \gamma_g,\xib) 
= 
\kappab^\top \z + \tau (1 \! - \!\! \sum_{g\in\GG} \! \eta_g \alpha_g ) + 
\sum_{g\in\GG} \binom{\alpha_g}{\z_g}^\top \! \binom{\gamma_g}{\xib^g_g}, 
$$ 
with the dual variables 
$\{\tau, (\gamma_g)_{g\in\GG},\xib\} \in \R{}_+ \! \times \! \R{|\GG|} \! \times \!\RR{p}{|\GG|}$
such that for all $g\in\GG$, 
$\xib^g_j = 0 \, \mbox{ if } \, j \notin g$
and 
$ \|\xib^g\|_1 \leq \gamma_g $.
The dual function is obtained by taking the derivatives of $\mathcal{L}$ with respect to the primal variables $\z$ and $(\alpha_g)_{g\in\GG}$ and equating them to zero,
which leads to 
\begin{eqnarray*}
	\forall j\in \{1,\dots,p\},&      \kappab_j + \!\! \sum_{ g \in \GG } \xib^g_j & =  0 \\
	\forall g \in \GG,& \tau \eta_g -  \gamma_g & = 0.
\end{eqnarray*}
After simplifying the Lagrangian and flipping the sign of $\xib$, the dual problem then reduces to
$$
	\min_{ \xib\in\RR{p}{|\G|},\tau }  \tau  \quad \mbox{ s.t. }
	\begin{cases}
	 \forall j \in \{1,\dots,p\}, \kappab_j = \!\! \sum_{ g \in \GG } \xib^g_j\,\mbox{ and }\, \xib^g_j = 0 \, \mbox{ if } \, j \notin g,\\
	 \forall g \in \GG, \|\xib^g\|_1 \leq \tau\eta_g,
	\end{cases}
$$
which is the desired result.
\end{proof}
We now prove that Algorithm~\ref{algo:dual_norm} is correct.
\begin{proposition}[Convergence and correctness of Algorithm \ref{algo:dual_norm}] ~\\
 Algorithm~\ref{algo:dual_norm} computes the value of the dual norm of Eq.~(\ref{eq:dual_norm}) in a finite and polynomial number of operations.
\end{proposition}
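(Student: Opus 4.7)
My plan is to mirror the proof structure used for Algorithm~\ref{algo:prox}, proceeding by induction on $|V|$ and leveraging max-flow/min-cut duality throughout.

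For termination and polynomial complexity, I would first show that whenever $\xibbar \neq \kappab$ after the max-flow step, the resulting min $(s,t)$-cut partitions $V$ into non-empty subsets $V^+$ and $V^-$, exactly as in the proof of Proposition~\ref{prop:convergence}: $V^+$ contains $s$ by definition, while $V^-$ must contain any index $j$ with $\xibbar_j < \kappab_j$, for otherwise the $(j,t)$ arc with finite residual would yield an augmenting path through $V^+$, contradicting the fact that the max-flow is attained. Since $(V^-, E^-)$ is strictly smaller than $(V,E)$, the recursion depth is bounded by $|V|$, and each level performs one max-flow plus a linear-time update, giving overall polynomial complexity.

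For correctness, I would prove by induction on $|V|$ that \texttt{dualNorm}$(V,E)$ returns the value $\Omega^*(\kappab)$ associated with the dual norm instance of problem~(\ref{eq:dual_norm}) restricted to this subgraph. The key observation for the start of each recursive call is that the value $\tau_0 := (\sum_{j\in V_u}\kappab_j)/(\sum_{g\in V_{gr}}\eta_g)$ is a lower bound on the optimum: summing the constraints $\|\xib^g\|_1 \leq \tau\eta_g$ over $g$ and using $\sum_{g}\xib^g = \kappab$ (with non-negative entries, after the standard sign-flip reduction) gives $\sum_j \kappab_j \leq \tau\sum_g\eta_g$. If the max-flow at $\tau_0$ saturates every $(j,t)$ arc, $\tau_0$ is both a lower bound and feasible, so it equals $\Omega^*(\kappab)$ and the algorithm terminates correctly. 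Otherwise, the min-cut $(V^+,V^-)$ yields, via max-flow/min-cut duality, a value $\tau_0\sum_{g\in V_{gr}^-}\eta_g + \sum_{j\in V_u^+}\kappab_j < \sum_j \kappab_j$, which rearranges to the sharper lower bound $\tau_0 < (\sum_{j\in V_u^-}\kappab_j)/(\sum_{g\in V_{gr}^-}\eta_g)$, precisely the starting value of $\tau$ in the recursive call on $(V^-,E^-)$.

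The main obstacle will be to identify the optimum of the full problem with the value returned by the recursive call on $V^-$. In one direction, any feasible $\tau$ for the full problem on $V$ restricts to a feasible solution of the subproblem on $V^-$: by the properties of the min-cut invoked before Proposition~\ref{prop:convergence}, no arc goes from $V^+$ to $V^-$ in the canonical (or equivalent) graph and no flow circulates from $V^-$ to $V^+$ at optimum, so each $\xib^g$ with $g \in V_{gr}^-$ can be taken supported on $V_u^-$ with unchanged $\ell_1$-norm; hence $\tau$ exceeds the recursive optimum $\tau'$. In the other direction, I construct a feasible flow for the full graph at $\tau=\tau'$ by splicing together the $V^+$-side flow obtained at $\tau_0$ (which already saturates $V_u^+$ using only groups in $V_{gr}^+$, since $\tau' \geq \tau_0$ only relaxes the $(s,g)$ capacities on $V^+$) and an optimal $V^-$-side flow at $\tau'$ provided by the induction hypothesis; the two sub-flows are compatible because no arc crosses the cut in either direction that carries flow. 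This yields $\Omega^*(\kappab) \leq \tau'$, closing the induction. The extension to equivalent graphs follows from Lemma~\ref{lemma:equivalent} exactly as in the proximal case.
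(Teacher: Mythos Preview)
Your proposal is correct and follows essentially the same inductive strategy as the paper's proof: establish the lower bound $\tau_0$ by summing the constraints, conclude immediately in the saturated case, and in the unsaturated case show $\tau'=\tau^\star$ by proving feasibility in both directions via the min-cut structure (the paper's argument is terser but structurally identical, including the flow-splicing you make explicit). One minor slip: in this paper's convention $s\notin V^+$, so non-emptiness of $V^+$ must come from the max-flow/min-cut argument of Proposition~\ref{prop:convergence} that you already reference, not from the claim that ``$V^+$ contains $s$ by definition.''
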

\begin{proof}
The convergence of the algorithm only requires to show that the
cardinality of $V$ in the different calls of the function
\texttt{computeFlow} strictly decreases. Similar arguments
to those used in the proof of Proposition~\ref{prop:convergence} can
show that each part of the cuts $(V^+,V^-)$ are
both non-empty. The algorithm thus requires a finite number of
calls to a max-flow algorithm and converges in a finite
and polynomial number of operations.

Let us now prove that the algorithm is correct for a canonical graph. We
proceed again by induction on the number of nodes of the graph.  
More precisely, we consider the
induction hypothesis $\H'(k)$
defined as:~\vspace*{0.2cm}\\
\textit{for all canonical graphs} $G=(V,E,s,t)$ associated with a group structure $\GG_V$ and \textit{such that} $|V| \leq k$, \texttt{dualNormAux}$(V=V_u \cup V_{gr},E)$
\textit{solves the following optimization problem:}
\begin{equation}
 \min_{\xib,\tau} \tau \quad \text{s.t.}\quad \forall j \in V_u, \kappab_j = \sum_{g\in V_{gr}}\xib_j^g,\ \text{and}\ \forall g\in V_{gr},\ \sum_{j \in V_u} \xib^g_j \leq \tau\eta_g ~~\text{with}~~\
\xib_j^g=0\ \text{if}\ j\notin g. \label{eq:Hprime}
\end{equation}
We first initialize the induction by $\H(2)$ (i.e., with the simplest canonical graph, such that $|V_{gr}|=|V_u|=1$).
Simple algebra shows that $\H(2)$ is indeed correct.

We next consider a canonical graph $G=(V,E,s,t)$ such that $|V|=k$, and suppose that $\H'(k-1)$ is true.
After the max-flow step, we have two possible cases to discuss:
\begin{enumerate}
\item If $\xibbar_j = \gammab_j$ for all $j$ in $V_u$, the algorithm stops.
We know that any scalar $\tau$ such that the constraints of
Eq.~(\ref{eq:Hprime}) are all satisfied necessarily verifies $\sum_{g \in
V_{gr}} \tau \eta_g \geq \sum_{j \in V_{u}}\kappab_j$. We have indeed that
$\sum_{g \in V_{gr}} \tau \eta_g$ is the value of an $(s,t)$-cut in the graph,
and $\sum_{j \in V_{u}}\kappab_j$ is the value of the max-flow, and the
inequality follows from the max-flow/min-cut theorem~\cite{ford}. This gives a
lower-bound on $\tau$. Since this bound is reached, $\tau$ is necessarily
optimal.
\item We now consider the case where there exists $j$ in $V_u$ such that
$\xibbar_j \neq \kappab_j$, meaning that for the given value of $\tau$,
the constraint set of Eq.~(\ref{eq:Hprime}) is not feasible for $\xib$,
and that the value of $\tau$ should necessarily increase.
The algorithm splits the vertex set $V$ into two non-empty parts $V^+$ and
$V^-$ and we remark that there are no arcs going from $V^+$ to $V^-$, and no
flow going from $V^-$ to $V^+$. 
 Since the arcs going from $s$ to $V^-$ are 
 saturated, we have that $\sum_{g \in V_{gr}^-} \tau \eta_g \leq \sum_{j
 \in V_{u}^-}\kappab_j$.
 Let us now consider $\tau^\star$ the solution of Eq.~(\ref{eq:Hprime}).
 Using the induction hypothesis $\H'(|V^-|)$, the algorithm computes a new value
 $\tau'$ that solves Eq.~(\ref{eq:Hprime}) when replacing $V$ by $V^-$ and
 this new value satisfies the following inequality $\sum_{g \in V_{gr}^-} \tau'
 \eta_g \geq \sum_{j \in V_{u}^-}\kappab_j$. The value of $\tau'$ has therefore
 increased and the updated flow $\xib$ now satisfies the constraints of
 Eq.~(\ref{eq:Hprime}) and therefore $\tau'\geq \tau^\star$.
 Since there are no arcs going from $V^+$ to $V^-$, $\tau^\star$ is feasible
for Eq.~(\ref{eq:Hprime}) when replacing $V$ by $V^-$ and we have that
$\tau^\star \geq  \tau'$ and then $\tau'=\tau^\star$.
\end{enumerate}
To prove that the result holds for any equivalent graph, similar arguments to those used in
the proof of Proposition~\ref{prop:convergence} can be exploited, 
showing that the algorithm
computes the same values of $\tau$ and same $(s,t)$-cuts at each step.
\end{proof}

\section{Algorithm FISTA with duality gap} \label{appendix:fista}

In this section, we describe in details the algorithm FISTA \cite{beck} when applied to solve problem~(\ref{eq:formulation}), 
with a duality gap as stopping criterion.

Without loss of generality, let us assume we are looking for models of the form $\X\w$, for some matrix $\X \in \RR{n}{p}$ (typically, linear models where $\X$ is the data matrix of $n$ observations). 
Thus, we can consider the following primal problem 
\begin{equation}
   \min_{\w \in \Real^p} f(\X\w) + \lambda \Omega(\w),\label{eq:formulation_with_X}
\end{equation}
in place of~(\ref{eq:formulation}). 
Based on Fenchel duality arguments \cite{borwein},
$$
 f(\X\w)+ \lambda \Omega(\w) + f^*(-\kappab),\ \text{for}\ \w\in\R{p},\kappab\in\R{n}\ \text{and}\
\Omega^*(\X^\top\kappab) \leq \lambda,
$$
is a duality gap for (\ref{eq:formulation_with_X}).
where $f^*(\kappab)\defin\sup_{\z} [\z^\top\kappab - f(\z)]$ is the Fenchel conjugate of $f$.
Given a primal variable $\w$, a good dual candidate $\kappab$ can be obtained by looking at the conditions
that have to be satisfied by the pair $(\w,\kappab)$ at optimality~\cite{borwein}.
In particular, the dual variable $\kappab$ is chosen to be  
$$
\kappab = -\rho^{-1} \nabla\! f(\X\w),\ \text{with}\ \rho\defin \max\big\{\lambda^{-1}\Omega^*(\X^\top\! \nabla\! f(\X\w)),1\big\}.
$$
Consequently, computing the duality gap requires evaluating the dual norm $\Omega^*$. 
We sum up the computation of the duality gap in~Algorithm~\ref{alg:fista}.

Moreover, we refer to the proximal operator associated with $\lambda \Omega$ as
$\text{prox}_{[\lambda\Omega]}$. As a brief reminder, it is defined as the
function that maps the vector~$\u$ in~$\R{p}$ to the (unique, by strong
convexity) solution of Eq.~(\ref{eq:prox_problem}).

\begin{algorithm}[hbtp]
\caption{FISTA procedure to solve problem~(\ref{eq:formulation_with_X}).}\label{alg:fista}
\begin{algorithmic}[1]
\STATE \textbf{Inputs}: initial $\w_{(0)} \in \R{p}$, $\Omega$, $\lambda>0$, $\varepsilon_{\text{gap}}>0$ (precision for the duality gap).
\STATE \textbf{Parameters}: $\nu > 1$, $L_{0} > 0$.
\STATE \textbf{Outputs}: solution $\w$.
\STATE \textbf{Initialization}: $\y_{(1)}=\w_{(0)}$, $t_1=1$, $k=1$.
\WHILE{ $\big\{$ \texttt{computeDualityGap}$\big(\w_{(k-1)}\big) > \varepsilon_{\text{gap}} \big\}$ }
\STATE Find the smallest integer $s_k\! \geq\! 0$ such that

\STATE $\quad f(\text{prox}_{[\lambda\Omega]}(\y_{(k)})) \leq f(\y_{(k)}) + \Delta_{(k)}^\top \nabla f(\y_{(k)}) + \frac{\tilde{L}}{2} \| \Delta_{(k)} \|_2^2,$
\STATE $\quad$with $\tilde{L}\defin L_{k}\nu^{s_k}$ and $\Delta_{(k)}\defin \y_{(k)}\!-\!\text{prox}_{[\lambda\Omega]}(\y_{(k)})$.
\STATE $L_{k}    \leftarrow  L_{k-1} \nu^{s_k}$.
\STATE $\w_{(k)}  \leftarrow \text{prox}_{[\lambda\Omega]}(\y_{(k)})$.
\STATE $t_{k+1}    \leftarrow (1+\sqrt{1+t_k^2})/2$.
\STATE $\y_{(k+1)} \leftarrow \w_{(k)} + \frac{t_k-1}{t_{k+1}} (\w_{(k)} - \w_{(k-1)})$.
\STATE $k \leftarrow k + 1$.
\ENDWHILE
\STATE \textbf{Return}: $\w \leftarrow  \w_{(k-1)}$.
\end{algorithmic}
\vspace*{0.2cm}
{\bf Procedure} \texttt{computeDualityGap}($\w$)
\begin{algorithmic}[1]
 \STATE $\kappab \leftarrow  -\rho^{-1} \nabla\! f(\X\w),\ \text{with}\ \rho\defin \max\big\{\lambda^{-1}\Omega^*(\X^\top\! \nabla\! f(\X\w)),1\big\}$.
 \STATE \textbf{Return}: $f(\X\w)+ \lambda \Omega(\w) + f^*(-\kappab)$.
\end{algorithmic}

\end{algorithm}

\section{Additional Experimental Results} \label{appendix:exp}
\subsection{Speed comparison of Algorithm \ref{algo:prox} with parametric max-flow algorithms}
\label{sec:speed_comp}
As shown in~\cite{hochbaum}, min-cost flow problems, and in particular, the dual problem of~(\ref{eq:prox_problem}), 
can be reduced to a specific \emph{parametric max-flow} problem.
We thus compare our approach (ProxFlow) with
the efficient parametric max-flow algorithm proposed by Gallo, Grigoriadis, and Tar-
jan ~\cite{gallo} and a simplified version of the latter proposed by Babenko and Goldberg in ~\cite{babenko}.
We refer to these two algorithms as GGT and SIMP respectively. 
The benchmark is established on the same datasets as those already used in the experimental section of the paper, 
namely: (1) three datasets built from overcomplete bases of discrete cosine transforms (DCT), with respectively 
 $10^4,\ 10^5$ and $10^6$ variables, and (2) images used for the background subtraction task, composed of 57600 pixels.
For GGT and SIMP, we use the \texttt{paraF} software which is a \texttt{C++} parametric max-flow implementation available at \texttt{http://www.avglab.com/andrew/soft.html}. Experiments were conducted on a single-core 2.33 Ghz.
 
We report in the following table the execution time in seconds of each algorithm, as well as the statistics of the corresponding problems:\\
\begin{center}
\begin{tabular}{|c||c|c|c|c|}
\hline
Number of variables $p$ & $10\,000$ & $100\,000$ & $1\,000\,000$ & $57\,600$ \\
\hline\hline
$|V|$ & $20\,000$ & $200\,000$ & $2\,000\,000$ & $75\,600$ \\
\hline
$|E|$ & $110\,000$ & $500\,000$ & $11\,000\,000$ & $579\,632$ \\ 
\hline
ProxFlow (in sec.)  & $\mathbf{0.4}$ & $\mathbf{3.1}$ & $\mathbf{113.0}$ & $\mathbf{1.7}$ \\
\hline
GGT (in sec.) & $2.4$ & $26.0$ & $525.0$ & $16.7$ \\
 \hline
SIMP (in sec.) & $1.2$ & $13.1$ & $284.0$ & $8.31$ \\ 
\hline
\end{tabular} 
\end{center}
Although we provide the speed comparison for a single value of $\lambda$ (the one used in the corresponding experiments of the paper), 
we observed that our approach consistently outperforms GGT and SIMP for values of $\lambda$ corresponding to different regularization regimes.

%
 \bibliographystyle{unsrt}
 \bibliography{RR-7372}

\end{document}